\documentclass[sigconf]{acmart}

\AtBeginDocument{%
  \providecommand\BibTeX{{%
    \normalfont B\kern-0.5em{\scshape i\kern-0.25em b}\kern-0.8em\TeX}}}

\usepackage{multirow}
\usepackage{amsmath}
\usepackage{makecell}
\usepackage{multirow}
\usepackage{mathrsfs}
\usepackage[ruled]{algorithm2e}
\usepackage{cellspace}
\usepackage{makecell}
\usepackage{bm}
\usepackage{graphicx}
\usepackage{paralist}
\usepackage{enumitem}
\usepackage[utf8]{inputenc} 
\usepackage[T1]{fontenc}    
\usepackage{hyperref}       
\usepackage{url}            
\usepackage{booktabs}       
\usepackage{amsfonts}       
\usepackage{nicefrac}       
\usepackage{microtype}      
\usepackage[ruled]{algorithm2e}
\usepackage{threeparttable}

\usepackage{graphicx}
\usepackage{subfigure}
\usepackage{booktabs} 
\usepackage{caption}

\usepackage{framed}
\usepackage{amsfonts}
\usepackage{mathrsfs}
\usepackage{mathtools}
\usepackage{array}
\usepackage{amsthm}
\usepackage{verbatim} 
\usepackage{enumerate}
\usepackage{bbm}
\usepackage{commath}
\usepackage{float}
\usepackage{amsmath}
\usepackage{tabularx}
\usepackage{listings}
\usepackage{xcolor}
\usepackage{colortbl}
\usepackage{tcolorbox}
\usepackage{bbding}
\usepackage{multirow}
\usepackage[toc,page,header]{appendix}
\usepackage{minitoc}
\usepackage{titletoc}





%

\copyrightyear{2023}
\acmYear{2023}
\setcopyright{acmlicensed}
\acmConference[KDD '23] {Proceedings of the 29th ACM SIGKDD Conference on Knowledge Discovery and Data Mining}{August 6--10, 2023}{Long Beach, CA, USA.}
\acmBooktitle{Proceedings of the 29th ACM SIGKDD Conference on Knowledge Discovery and Data Mining (KDD '23), August 6--10, 2023, Long Beach, CA, USA}
\acmPrice{15.00}
\acmISBN{979-8-4007-0103-0/23/08}
\acmDOI{10.1145/3580305.3599263}

\settopmatter{printacmref=true}

\begin{document}

\newtheorem{thm}{Theorem} 
\newtheorem{remark}[thm]{Remark}
\definecolor{comment}{RGB}{70, 150, 60}

\newcommand{\xm}{x^{-}}
\newcommand{\xp}{x^{+}}
\newcommand{\model}{BatchSampler\xspace}
\newcommand{\graph}{proximity graph\xspace}

\newcommand{\Set}[1]{\mathcal{#1}}
\newcommand{\vpara}[1]{\vspace{0.07in}\noindent\textbf{#1 }}
\newcommand{\Mat}[1]{\mathbf{#1}}
\newtheorem{prop}{Proposition}
\newtheorem{problem}{Problem}
\newcommand{\hide}[1] 

\title{\model: Sampling Mini-Batches for Contrastive Learning in Vision, Language, and Graphs}

\author{Zhen Yang}
\affiliation{%
  \institution{Tsinghua University}
}
\email{yangz21@mails.tsinghua.edu.cn}
\authornote{These authors contributed equally to this research.}

\author{Tinglin Huang}
\authornotemark[1]
\affiliation{%
  \institution{Tsinghua University}
  \institution{Yale University}
}
\email{tinglin.huang@yale.edu}
\authornote{This work was done when the author worked at Tsinghua University. }

\author{Ming Ding}
\authornotemark[1]
\affiliation{%
  \institution{Tsinghua University}
}
\email{dm18@mails.tsinghua.edu.cn}

\author{Yuxiao Dong}
\affiliation{%
  \institution{Tsinghua University}
}
\email{yuxiaod@tsinghua.edu.cn}
\authornote{YD and JT are the corresponding authors.}

\author{Rex Ying}
\affiliation{%
  \institution{Yale University}
}
\email{rex.ying@yale.edu}

\author{Yukuo Cen}
\affiliation{%
  \institution{Tsinghua University}
}
\email{cyk20@mails.tsinghua.edu.cn}

\author{Yangliao Geng}
\affiliation{%
  \institution{Tsinghua University}
}
\email{gengyla@mail.tsinghua.edu.cn}

\author{Jie Tang}
\affiliation{%
  \institution{Tsinghua University}
}
\email{jietang@tsinghua.edu.cn}
\authornotemark[3]

\renewcommand{\shortauthors}{Zhen Yang et al.}
\renewcommand{\shorttitle}{\footnotesize \model: Sampling Mini-Batches for Contrastive Learning in Vision, Language, and Graphs}

\begin{abstract}
In-Batch contrastive learning is a state-of-the-art self-supervised method that brings semantically-similar instances close while pushing dissimilar instances apart within a mini-batch. 
Its key to success is the negative sharing strategy, in which every instance serves as a negative for the others within the mini-batch. Recent studies aim to improve performance by sampling hard negatives \textit{within the current mini-batch}, whose quality is bounded by the mini-batch itself. In this work, we propose to improve contrastive learning by sampling mini-batches from the input data. We present \model\footnote{The code is available at \url{https://github.com/THUDM/BatchSampler}} to sample mini-batches of hard-to-distinguish (i.e., hard and true negatives to each other) instances. To make each mini-batch have fewer false negatives, we design the proximity graph of randomly-selected instances. To form the mini-batch, we leverage random walk with restart on the proximity graph to help sample hard-to-distinguish instances. \model is a simple and general technique that can be directly plugged into existing contrastive learning models in vision, language, and graphs. Extensive experiments on datasets of three modalities show that \model can consistently improve the performance of powerful contrastive models, as shown by significant improvements of SimCLR on ImageNet-100, SimCSE on STS (language), and GraphCL and MVGRL on graph datasets.

\end{abstract}

\begin{CCSXML}
<ccs2012>
   <concept>
       <concept_id>10010147.10010257.10010258</concept_id>
       <concept_desc>Computing methodologies~Learning paradigms</concept_desc>
       <concept_significance>500</concept_significance>
       </concept>
 </ccs2012>
\end{CCSXML}

\ccsdesc[500]{Computing methodologies~Learning paradigms}


%

\keywords{Mini-Batch Sampling; Global Hard Negatives; Contrastive Learning}

%
\maketitle

\section{Introduction} \label{sec:intro}
\begin{figure}[t]
    \centering
    \includegraphics[width=0.48\textwidth]{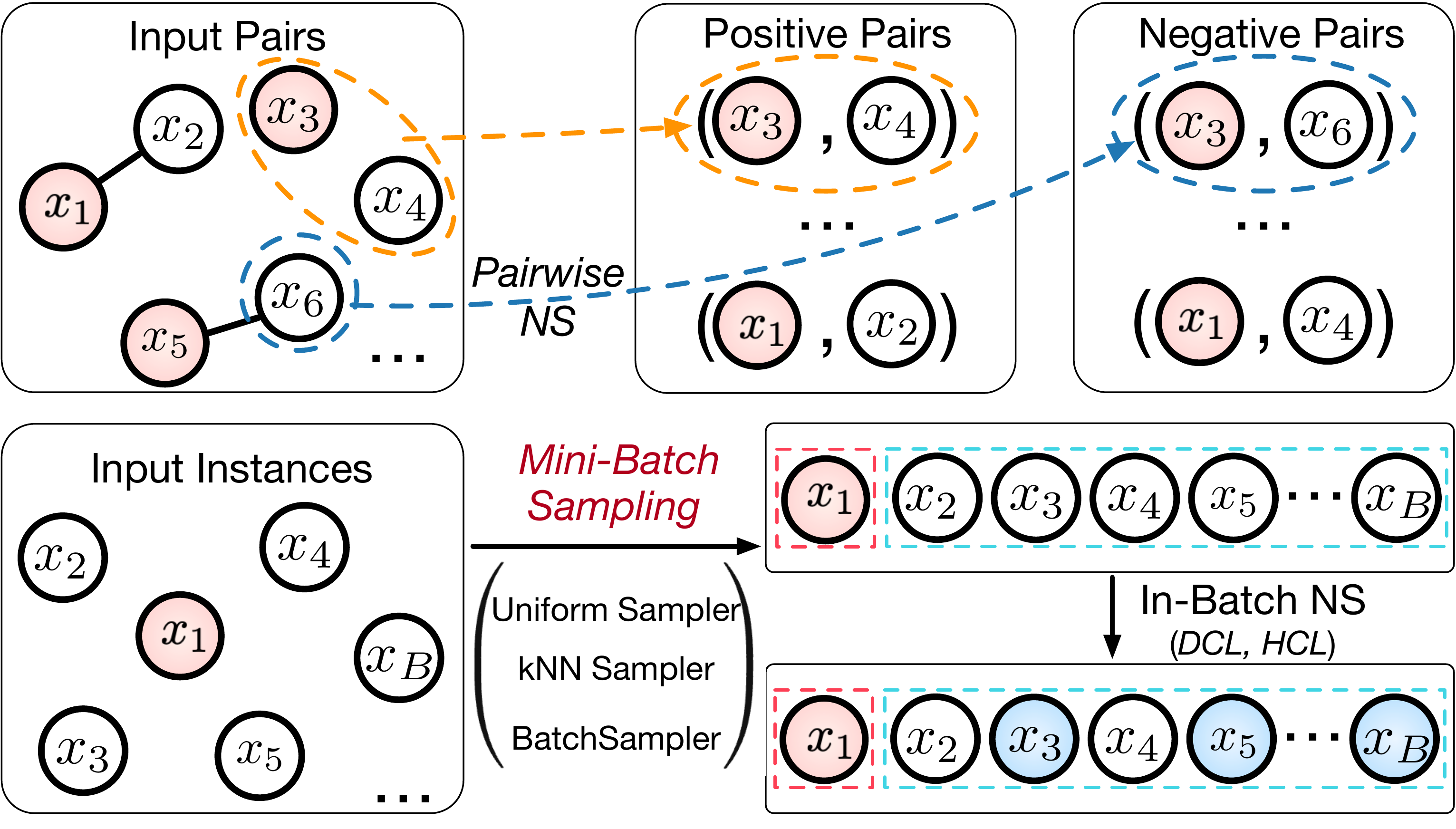}
    \vspace{-4mm}
    \caption{Technical comparison between Mini-Batch Sampling, Pairwise Negative Sampling (NS), and In-Batch Negative Sampling. 
    \textmd{
    The blue nodes are hard negatives. 
    }}
    \label{fig:difference}
    \vspace{-3mm}
\end{figure}

Contrastive learning~\cite{oord2018representation,jaiswal2020survey} is one of the dominant strategies for self-supervised representation learning across various data domains, such as MoCo~\citep{he2019momentum} and SimCLR~\citep{simclr2020} in computer vision, SimCSE~\citep{simcse2021} in natural language processing, and GraphCL~\citep{graphcl2020} in graph representation learning. 
The essence of self-supervised contrastive learning is to make similar instances close to each other and dissimilar ones farther away in the learned representation space.

The course of self-supervised contrastive models usually starts with loading each mini-batch of $B$ instances sequentially from the input data (e.g., the images in Figure \ref{fig:motivating_example} (a)). 
In each batch, each instance $x$ is associated with its augmentation $x^+$ as the positive samples and the other instances as negatives $\{x^-\}$. 
Commonly by using the InfoNCE loss~\cite{oord2018representation}, the goal of contrastive learning is to discriminate instances by mapping positive pairs ($x, x^+$) to similar embeddings and negative pairs ($x, x^-$) to dissimilar embeddings.

Given the self-supervised contrastive setting, the negative samples $\{x^-\}$---with unknown labels---play an essential role in the contrastive optimization process. 
To improve in-batch contrastive learning, there are various attempts to take on them from different perspectives. 
Globally, SimCLR~\cite{simclr2020} shows that simply increasing the size $B$ of the mini-batch (e.g., 8192)---i.e., more negative samples---outperforms previously carefully-designed strategies, such as the memory bank~\citep{wu2018unsupervised}  and the consistency improvement of the stored negatives in MoCo~\citep{he2019momentum}. 
Locally, given each mini-batch, recent studies such as the DCL and HCL~\citep{hard2021,debias2021} methods have focused on identifying true or hard negatives within this batch. 
In other words, existing efforts have largely focused on designing better negative sampling techniques after each mini-batch of instances is loaded.

\begin{figure*}[hbpt]
    \centering
    \includegraphics[width=0.98\textwidth]{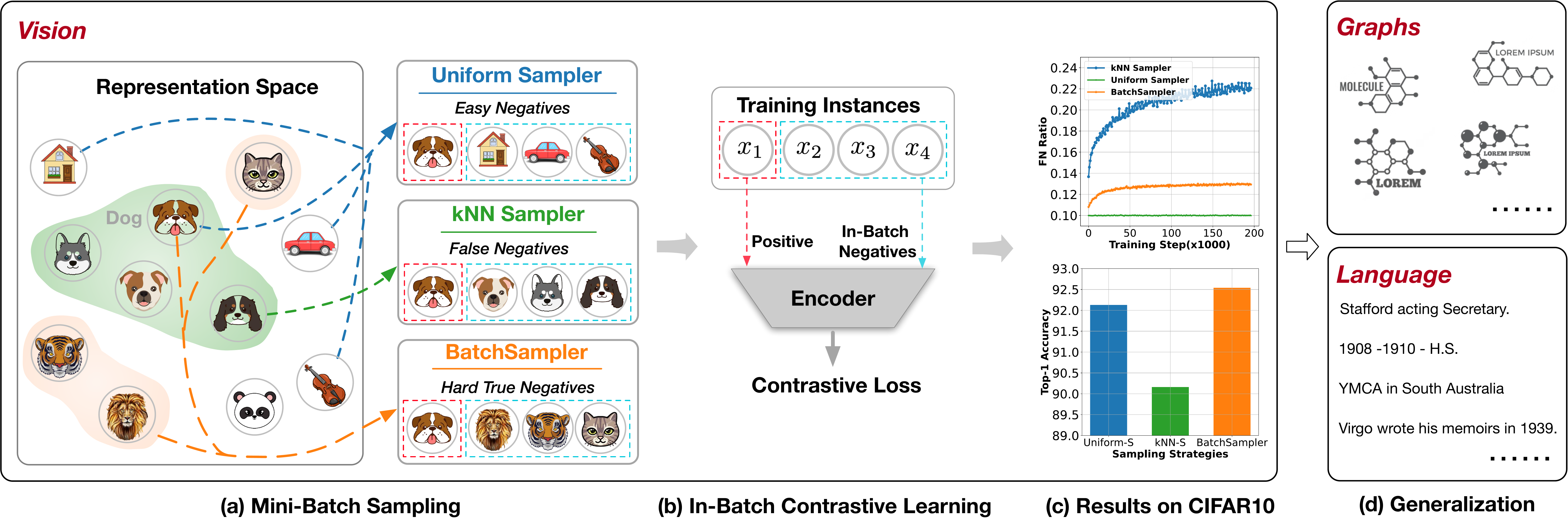}
    \vspace{-2mm}
    \caption{A motivating illustration of \model, using vision as an example. \textmd{Uniform Sampler randomly samples a batch of instances, which contains easy negatives. kNN Sampler selects the nearest instances to form a batch, resulting in so many false negatives. \model samples a mini-batch with hard yet true negatives based on \graph. 
    }}
    \label{fig:motivating_example}
    \vspace{-3mm}
\end{figure*}

\vpara{Problem.}
In this work, we instead propose to globally sample instances from the input data to form mini-batches.  
The goal is to have mini-batches that naturally contain as many hard negatives $\{x^-\}$---that have different (underlying) labels from $x$ (true negatives) but similar representations with $x$---for each instance $x$ as possible. 
In this way, the discrimination between positive and negative instances in each mini-batch can better inform contrastive optimization.

\vpara{Uniform \& kNN Samplers.}
Traditionally, there are two ways to form mini-batches (Cf. Figure ~\ref{fig:motivating_example} (a)). 
The common option is the \textit{Uniform Sampler} that sequentially loads or uniformly samples a batch of instances for each training step\cite{simclr2020,simcse2021,graphcl2020}. 
However, the Uniform Sampler neglects the effect of hard negatives~\citep{mochi2020,hard2021}, and the batches formed contain easy negatives with low gradients that contribute little to optimization~\cite{mcns2020,xiong2020approximate}.  
In order to have hard negatives, it is natural to cluster instances that are nearest to each other in the representation space to form each batch, that is, the \textit{kNN Sampler}. 
Unfortunately, the instances with the same underlying labels are also expected to cluster together~\citep{simclr2020,caron2020unsupervised}, resulting in high percentages of false negatives in the batch (Cf. Figure ~\ref{fig:motivating_example} (c)).

\vpara{Contributions: \model.} 
We present \model to sample mini-batches of hard-to-distinguish instances for in-batch contrastive learning. 
Fundamentally, each mini-batch is required to cover hard yet true negatives, thus addressing the issues faced by Uniform and kNN Samplers. 
To achieve this, we design the proximity graph of randomly-selected instances in which each edge is used to control the pairwise similarity of their representations---that is, the hardness of negative samples. 
The false negative issue in the kNN Sampler is mitigated when random instances are picked to construct the graph. 
To form one batch, we leverage random walk with restart on the proximity graph to draw instances. 
The premise is that the local neighbors sampled by the walkers are similar, that is, hard-to-distinguish.

\model is a simple and general technique that can be directly plugged into in-batch contrastive models in vision, language, and graphs. 
The experimental results show that the well-known contrastive learning models---SimCLR~\citep{simclr2020} and MoCo v3~\citep{chen2021empirical} in vision, SimCSE~\citep{simcse2021} in language, and GraphCL~\citep{graphcl2020} and MVGRL~\cite{hassani2020contrastive} in graphs---can benefit from the mini-batches formed by \model. 
We also theoretically and empirically show that how \model balances the challenges faced in the Uniform and kNN Samplers.

\vpara{Differences from Pairwise Negative Sampling.} 
The problem of mini-batch sampling is different from the pairwise negative sampling problem. 
As shown in Figure~\ref{fig:difference}, the pairwise negative sampling method samples negative instances for each positive pair, whereas the mini-batch sampling methods focus on sampling mini-batches that reuses other instances as negatives in the batch. 
Massive previous works in pairwise negative sampling~\cite{huang2020embedding,xiong2020approximate,karpukhin2020dense,mcns2020,mixgcf2021} have attempted to improve performance by globally selecting similar negatives for a given anchor from the entire dataset. 
However, our focus is on globally sampling mini-batches that contain many hard negatives rather than only selecting hard negatives for each pair.

\section{Related Work}
\label{sec:related_work}
\vpara{Contrastive learning in different modalities.} 
Contrastive learning follows a similar paradigm that contrasts similar and dissimilar observations based on noise contrastive estimation~(NCE)~\citep{nce2010,oord2018representation}. 
The primary distinction between contrastive methods of different modalities is how they augment the data.
As for computer vision, 
MoCo~\citep{he2019momentum} and SimCLR~\citep{simclr2020} augment data with geometric transformation and appearance transformation.
Besides simply using data augmentation, WCL~\citep{zheng2021weakly} additionally utilizes an affinity graph to construct positive pairs for each example within the mini-batch. \citet{wu2021enabling} design a data replacement strategy to mine hard positive instances for contrastive learning. 
As for language, CLEAR~\citep{wu2020clear} and COCO-LM~\citep{meng2021coco} augment the text data through word deletion, reordering, and substitution, 
while SimCSE~\citep{simcse2021} obtains the augmented instances by applying the standard dropout twice.
As for graphs,
DGI~\citep{petar2018deep} and InfoGraph~\citep{sun2019infograph}
treat the node representations and corresponding graph representations as positive pairs. 
Besides, InfoGCL\citep{xu2021infogcl}, JOAO~\citep{you2021graph}, GCA~\cite{zhu2021graph}, GCC~\citep{gcc2020} and GraphCL~\citep{graphcl2020} augment the graph data by graph sampling
or proximity-oriented methods.
MVGRL~\cite{hassani2020contrastive} proposes to compare the node representation in one view with the graph representation in the other view.
\citet{zhu2021empirical} compares different kinds of graph augmentation strategies.
Our proposed \model is a general mini-batch sampler that can directly be applied to any in-batch contrastive learning framework with different modalities.

\vpara{Negative sampling in contrastive learning.} 
Previous studies about negative sampling in contrastive learning roughly fall into two categories: 
\textbf{(1) Memory-based negative sampling strategy}, such as MoCo~\citep{he2019momentum},
maintains a fixed-size memory bank to store negatives which are updated regularly during the training process.
MoCHI~\citep{mochi2020} and m-mix~\cite{zhang2022m} propose to mix the hard negative candidates at the feature level to generate more challenging negative pairs.
MoCoRing~\citep{wu2020conditional} samples hard negatives from a defined conditional distribution which keeps a lower bound on the mutual information.
\textbf{(2) In-batch negative sharing strategy}, such as SimCLR~\citep{simclr2020} and MoCo v3~\citep{chen2021empirical},
adopts different instances in the current mini-batch as negatives.
To mitigate the false negative issue, DCL~\citep{debias2021} modifies the original InfoNCE objective to reweight the contrastive loss. \citet{huynh2022boosting} identifies the false negatives within a mini-batch by comparing the similarity between negatives and the anchor image's multiple support views. 
Additionally, HCL~\citep{hard2021} revises the original InfoNCE objective by assigning higher weights for hard negatives among the mini-batch. Recently, UnReMix~\citep{tabassum2022hard} is proposed to sample hard negatives by effectively capturing aspects of anchor similarity, representativeness, and model uncertainty. However, such locally sampled hard negatives cannot exploit hard negatives sufficiently from the dataset.

Global hard negative sampling methods on triplet loss have been widely investigated, which aim to globally sample hard negatives for a given positive pair.
For example, \citet{wang2021solving} proposes to take rank-k hard negatives from some randomly sampled negatives. \citet{xiong2020approximate} globally samples hard negatives by an asynchronously-updated approximate nearest neighbor (ANN) index for dense text retrieval. 
Different from the above methods which are applied to a triplet loss for a given pair, \model samples mini-batches with hard negatives for InfoNCE loss.

\section{Problem: Mini-Batch Sampling for Contrastive Learning}
\label{sec:prosampler}
\label{sec:ns}

\vpara{In-Batch Contrastive Learning.}
\emph{In-batch contrastive learning} commonly follows or slightly updates the following objective~\citep{nce2010,oord2018representation} across different domains, such as graphs~\cite{graphcl2020,gcc2020}, vision~\cite{simclr2020,chen2021empirical}, and language~\cite{simcse2021}:
\begin{small}
\begin{align}\label{equ:infonce}
\text{min} \ \ \mathbb{E}_{\{x_1...\ x_B\} \subset \mathcal{D}} \left [-\sum_{i=1}^B\log \frac{e^{f(x_i)^T f(x_i^+)}}{e^{f(x_i)^T f(x_i^+) } + \sum_{j\neq i}e^{f(x_i)^T f(x_j)}} \right ],
\end{align}
\end{small}
where $\{x_1...\ x_B\}$ is a mini-batch of samples (usually) sequentially loaded from the dataset $\mathcal{D}$, and $x_i^+$ is an augmented version of $x_i$. The encoder $f(\cdot)$ learns to discriminate instances by mapping different data-augmentation versions of the same instance (positive pairs) to similar embeddings, and mapping different instances in the mini-batch (negative pairs) to dissimilar embeddings.

Usually, the in-batch negative sharing strategy---every instance serves as a negative to the other instances within the mini-batch---is used to boost the training efficiency~\cite{simclr2020}. 
It is then natural to have hard negatives in each mini-batch for improving contrastive learning. 
Straightforwardly, we could attempt to sample hard negatives within the mini-batch~\citep{debias2021,hard2021,karpukhin2020dense}. 
However, the batch size of a mini-batch is---by definition---far smaller than the size of the input dataset, and existing studies show that sampling such a local sampling method fails to effectively explore all the hard negatives~\citep{xiong2020approximate,zhang2013optimizing}.

\vpara{\textit{Problem: Mini-Batch Sampling.}}
The goal of this work is to have a general mini-batch sampling strategy to support different modalities of data. 
Specifically, given a set of data instances $\Set{D}=\{x_1,\cdots,x_N\}$, the objective is to design a modality-independent sampler 
to sample a mini-batch of instances where each pair of instances are hard to distinguish across the dataset.

There are two existing strategies---Uniform Sampler and kNN Sampler---adopted in contrastive learning.  

\vpara{Uniform Sampler} is the most common strategy used in contrastive learning~\citep{simclr2020,simcse2021,graphcl2020}.
The pipeline is to first randomly sample a batch of instances for each training step, then feed them into the model for optimization. 

Though simple and model-independent, Uniform Sampler neglects the effect of hard negatives~\citep{mochi2020,hard2021}, and the batches formed contain negatives with low gradients that contribute little to optimization.  
Empirically, we show that Uniform Sampler results in a low percentage of similar instance pairs in a mini-batch (Cf. Figure~\ref{fig:per_sampler}). 
Theoretically, \citet{mcns2020} and \citet{xiong2020approximate} also prove that the sampled negative should be similar to the query instance since it can provide a meaningful gradient to the model.

\vpara{kNN Sampler} globally samples a mini-batch with many hard negatives. 
As its name indicates, it tries to pick an instance at random and retrieve a set of nearest neighbors to construct a batch.

However, the instances of the same `class' will be clustered together in the embedding space
~\citep{simclr2020,caron2020unsupervised}. 
Hence the negatives retrieved by it are hard at first but would be replaced by false negatives~(FNs) as the training epochs increase, misguiding the model training.

In summary, Uniform Sampler leverages random negatives to guide the optimization of the model; 
whereas the kNN one explicitly samples hard negatives but suffers from false negatives. 
Thus, an ideal negative sampler for in-batch contrastive learning should balance between kNN and Uniform Samplers, ensuring both the exploitation of hard negatives and the mitigation of the FN issue.

\section{The B\texorpdfstring{\MakeLowercase{atch}}{atch}S\texorpdfstring{\MakeLowercase{ampler}}{ampler} Method}

\begin{figure*}[t]
    \centering
    \includegraphics[width=0.98\textwidth]{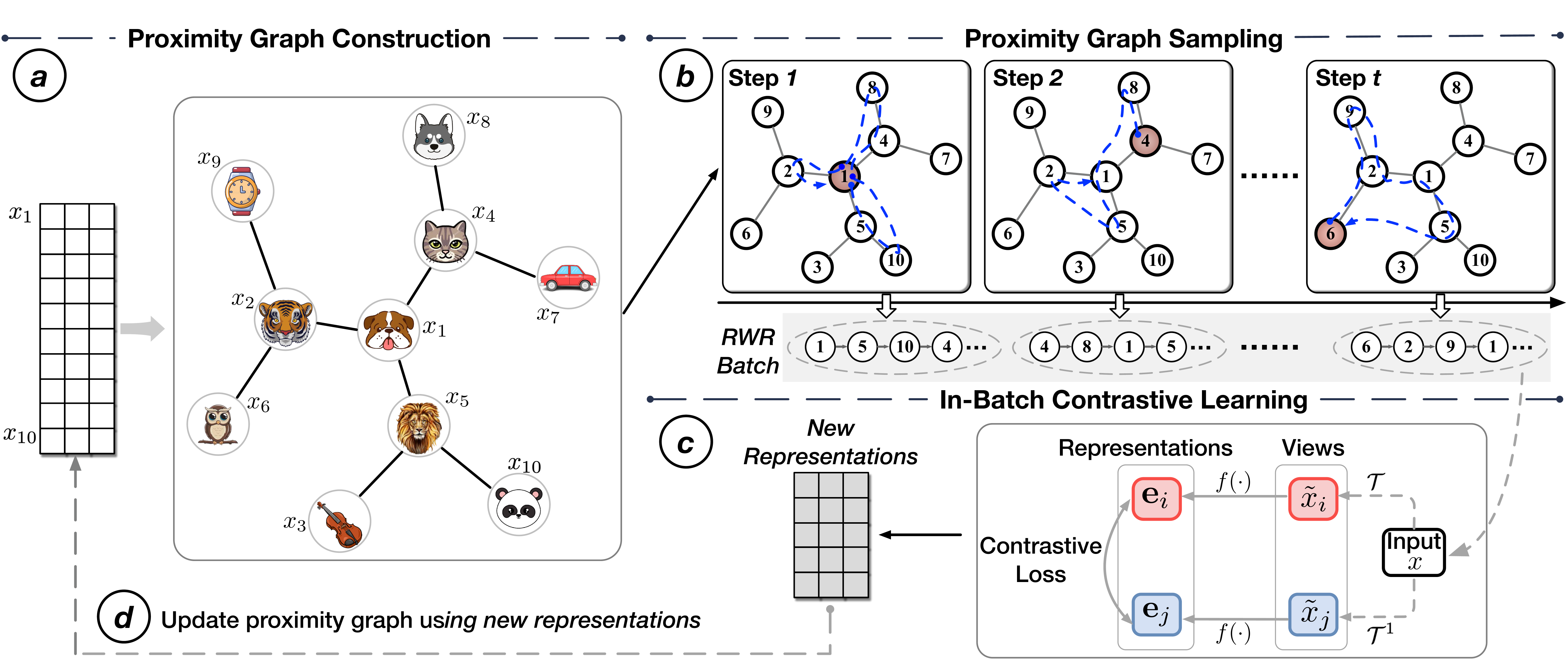}
    \caption{The framework of \model, using the vision modality as an example. \textmd{The \graph is first constructed based on generated image representations and will be updated every $t$ training step. Next, a \graph-based negative sampler is applied to generate a batch with hard negatives for in-batch contrastive learning.}}
    \label{fig:framework}
    \vspace{-2mm}
\end{figure*}

To improve contrastive learning, we propose the \model method with the goal of forming mini-batches with 1) hard-to-distinguish instances while 2) avoiding false negatives. 
\model is a simple and general strategy that can be directly plugged into existing in-batch contrastive learning models in vision, language, and graphs. 
Figure ~\ref{fig:framework} shows the overview of \model.

The basic idea of \model is to form mini-batches by \textbf{globally} sampling instances based on the similarity between each pair of them, which is significantly different from previous local sampling methods in contrastive learning~\cite{debias2021,hard2021,mochi2020}. 
To achieve this, the first step of \model is to construct the proximity graph of instances with edges measuring the pairwise similarity. 
Second, we perform mini-batch sampling as a walk by leveraging a simple and commonly-used graph sampling technique, random walk with restart, to sample instances (nodes) from the proximity graph.

\subsection{Proximity Graph Construction}

The goal of the proximity graph is to collect candidate instances that alleviate the issues faced by Uniform and kNN Sampler by connecting similar instances while reducing the false negative pairs.
If neighbors of an instance are randomly chosen from the dataset, sampling on the resulting graph resembles Uniform Sampler. Conversely, if the most similar instances are chosen as neighbors, it resembles the behavior of kNN Sampler. In \model, we propose a simple strategy to construct the proximity graph as follows.

Given a training dataset, we have $N$ instances $\{v_i|i=1,\cdots,N\}$ and their corresponding representations $\{\Mat{e}_i|i=1,\cdots,N\}$ generated by the encoder $f(\cdot)$. The proximity graph is formulated as:
\begin{align}
    G=(\Set{V},\Set{E}),
    \label{eq_G}
\end{align}
where the node set $\Set{V}=\{v_1,\cdots,v_N\}$ denotes the instances and $\Set{E}\subseteq\{(v_i,v_j)|v_i,v_j\in\Set{V}\}$ is a set of node pairs. 
Let $\Set{N}_{i}$ be the neighbor set of $v_i$ in the \graph. 
To construct $\Set{N}_{i}$, we first form a candidate set $\mathcal{C}_i=\{v_m\}$
for each instance $v_i$ by uniformly picking $M(M\ll N)$ neighbor candidates. 
Each node and its neighbor form an edge, resulting in the edge set $\Set{E}$.
Then we select the $K$ nearest ones from the candidate set:
\begin{align}
    \Set{N}_{i}=\mathop{\text{TopK}}\limits_{v_m\in\Set{C}_i}\left (\Mat{e}_i\cdot\Mat{e}_m\right ),
    \label{equ:eq_G}
\end{align}
where $\cdot$ is the inner product operation. 
$M$ is used to control the similarity between the center node and its immediate neighbor nodes, which can be demonstrated by the following proposition:
\begin{prop}\label{prop:graph}
Given an instance $v_i$ with the corresponding representation $\mathbf{e}_i$, assume that there are at least $S$ instances whose inner product similarity with $v_i$ is larger than $s$, i.e.,
\begin{equation}
 \left|\left\{v_j\in\mathcal{V}\mid \mathbf{e}_i\cdot \mathbf{e}_j>s\right\}\right|\geq S.   
\end{equation}
Then in the proximity graph $G$, the similarity between $v_i$ and its neighbors is larger than $s$ with proximate probability at least:
\begin{equation}
\label{equ:prob_neig_simi}
\mathbb{P}\left\{\mathbf{e}_i\cdot\mathbf{e}_k>s,\forall v_k\in\mathcal{N}_i\right\} \gtrapprox\left(1-p^M\right)^K,
\end{equation}
where $p=\frac{N-S}{N}$, and $K$ is the number of neighbors.
\end{prop}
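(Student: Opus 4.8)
The plan is to follow the randomness through the two stages that build $\mathcal{N}_i$: the uniform draw of the $M$-element candidate set $\mathcal{C}_i$, then the deterministic TopK. Call an instance $v_j$ \emph{good} if $\mathbf{e}_i\cdot\mathbf{e}_j>s$. By hypothesis there are at least $S$ good instances, so a single uniformly drawn candidate is good with probability at least $S/N=1-p$. Because $M\ll N$, I would treat the $M$ draws as (approximately) independent, so the number $X$ of good instances inside $\mathcal{C}_i$ stochastically dominates a $\mathrm{Binomial}(M,1-p)$ variable.

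The pivotal deterministic fact is that TopK returns only good neighbors whenever $\mathcal{C}_i$ contains at least $K$ good instances, since the $K$ largest inner products $\mathbf{e}_i\cdot\mathbf{e}_m$ are then all realized by good candidates. Thus $X\ge K$ is a sufficient event for $\{\mathbf{e}_i\cdot\mathbf{e}_k>s,\ \forall v_k\in\mathcal{N}_i\}$, and the proof reduces to lower-bounding $\mathbb{P}(X\ge K)$.

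To land on the closed form $(1-p^M)^K$, I would read ``$\mathcal{C}_i$ contains $\ge K$ good instances'' as $K$ successive rounds, each round required to supply one more good instance from a fresh pool of $M$ uniform candidates; a round fails only if all $M$ of its candidates are bad, which happens with probability at most $p^M$, so it succeeds with probability at least $1-p^M$, and multiplying the $K$ (treated-as-independent) rounds gives $(1-p^M)^K$. The symbol $\gtrapprox$ and the phrase ``proximate probability'' in the statement are precisely what carry the two approximations used: (i) replacing without-replacement sampling by independent draws, justified by $M\ll N$; and (ii) decoupling the $K$ TopK slots into independent single-pool events.

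I expect step (ii) to be the main obstacle. The $K$ neighbors emitted by one TopK call are order statistics of a single shared candidate pool, not outcomes of $K$ independent pools, so an exact computation yields $\mathbb{P}(\mathrm{Binomial}(M,q)\ge K)$ with $q\ge 1-p$ rather than the tidy product $(1-p^M)^K$. I would therefore either keep the result at the stated approximate level, presenting $(1-p^M)^K$ as the natural estimate obtained by pretending each of the $K$ slots draws from its own pool, or, for a fully rigorous variant, state the bound as the Binomial tail $\sum_{j\ge K}\binom{M}{j}(1-p)^j p^{M-j}$, which already equals $1-p^M$ at $K=1$ and is increasing in both $M$ and $S$, thereby preserving the qualitative takeaway that enlarging $M$ pushes the similarity of graph neighbors above the threshold $s$.
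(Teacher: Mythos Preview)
Your setup is exactly right and matches the paper: both of you reduce the event $\{\mathbf{e}_i\cdot\mathbf{e}_k>s,\ \forall v_k\in\mathcal{N}_i\}$ to ``the candidate pool $\mathcal{C}_i$ contains at least $K$ good instances,'' invoke $M\ll N$ to replace sampling without replacement by i.i.d.\ draws, and arrive at the exact (approximate-model) probability being the Binomial tail $1-\sum_{k=0}^{K-1}\binom{M}{k}p^{M-k}(1-p)^k$.

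Where you diverge is in how you pass from that tail to the closed form $(1-p^M)^K$. You obtain it by a heuristic decoupling, pretending each of the $K$ TopK slots draws from its own fresh pool of $M$ candidates, and you flag this honestly as a second approximation carried by the symbol $\gtrapprox$. The paper instead keeps the single-pool Binomial tail and proves $1-\sum_{k=0}^{K-1}\binom{M}{k}p^{M-k}(1-p)^k\gtrapprox(1-p^M)^K$ by induction on $K$: the inductive step reduces to $(1-p^M)^{L-1}p^{L-1}\gtrapprox\binom{M}{L-1}(1-p)^{L-1}$, and after invoking $\binom{M}{L-1}\le(eM/(L-1))^{L-1}$ this becomes the explicit sufficient condition $(N-S)(L-1)(1-p^M)\gtrapprox eMS$, which the authors declare to hold in the parameter regime of interest. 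So both arguments are ultimately approximate, but the paper's route trades your clean probabilistic intuition for a concrete algebraic condition on $N,S,M,K$ under which the bound is valid; your route is shorter and more transparent but does not isolate such a condition.
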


\begin{proof}
Since $M\ll N$, we can approximately assume that the sampling is with replacement. In this case, we have 
\begin{equation}
\mathbb{P}\left\{\mathbf{e}_i\cdot\mathbf{e}_k>s,\forall v_k\in\mathcal{N}_i\right\}=1-\sum_{k=0}^{K-1}{M\choose k}p^{M-k}\left(1-p\right)^{k}.
\end{equation}
Then let us prove \eqref{equ:prob_neig_simi} by induction. When $K=1$, the conclusion clearly holds.

Assuming that the conclusion holds when $K=L-1$, let us consider the case when $K=L$. We have
\begin{small}
\begin{equation}
\begin{aligned}
1-\sum_{k=0}^{L-1}{M\choose k}p^{M-k}(1-p)^{k}\gtrapprox \left(1-p^M\right)^{L-1}-{M\choose L-1}p^{M-L+1}\left(1-p\right)^{L-1}.
\end{aligned}
\end{equation}
\end{small}
To prove the conclusion, we only need to show
\begin{equation}
\left(1-p^M\right)^{L-1}p^M\gtrapprox  {M\choose L-1}p^{M-L+1}\left(1-p\right)^{L-1},
\end{equation}
or equivalently
\begin{equation}
\label{equ:pf_first}
\begin{aligned}
\left(1-p^M\right)^{L-1}p^{L-1}&=\left(1-p^M\right)^{L-1}\left(\frac{N-S}{N}\right)^{L-1}\\
&\gtrapprox {M\choose L-1}\left(\frac{S}{N}\right)^{L-1}= {M\choose L-1}\left(1-p\right)^{L-1}.
\end{aligned}
\end{equation}
On the other hand, according to \cite{knuth1997art}, we have
\begin{equation}
\label{equ:pf_second}
 {M\choose L-1}  \leq \left(\frac{eM}{L-1}\right)^{L-1},
\end{equation}
where $e$ denotes the Euler's number.
Substituting \eqref{equ:pf_second} into \eqref{equ:pf_first}, we only need to show
\begin{equation}
(N-S)(L-1)\left(1-p^M\right) \gtrapprox eMS.
\end{equation}
The above relation holds depending on the choices of $M$, $S$ and $L$, which can be satisfied in our scenario in most cases.
\end{proof}

Proposition~\ref{prop:graph} suggests that the candidate set size $M$ can control the similarity between each center node and its immediate neighbor nodes. 
A larger $M$ indicates a greater probability that two adjacent nodes are similar, and the \graph constructed would be more like the graph of nodes clustered by kNN. 
If $M$ is small and close to $K$, the instances in the proximity graph can be considered as randomly-selected ones, that is, by Uniform Sampler.

\subsection{Proximity Graph Sampling} 

We perform the mini-batch sampling as a walk in the \graph, which collects the visited instances as sampling results from a sourced node. Here we propose to apply Random Walk with Restart~(RWR), which offers a theoretically supported ability to control the walker's behavior.

As shown in Algorithm~\ref{rwr}, starting from a node, the sampler moves from one node to another by either teleporting back to the start node with probability $\alpha$ or moving to a neighboring node proportional to the edge weight. The process continues until a fixed number of nodes are collected and taken as the mini-batch. The effectiveness of RWR lies in its ability to modulate the probability of sampling within a neighborhood by adjusting $\alpha$, as demonstrated by Proposition~\ref{prop:rwr}:
\begin{prop}\label{prop:rwr} 
For all $0<\alpha\le 1$ and $\Set{S}\subset\Set{V}$, the probability that a Lazy Random Walk with Restart starting from a node $u\in\Set{S}$ escapes $\Set{S}$ satisfies
$\sum_{v\in(\Set{V}-\Set{S})}\Mat{p}_u(v)\le\frac{1-\alpha}{2\alpha}\Phi(\Set{S})$, where $\Mat{p}_u$ is the stationary distribution, and $\Phi(\Set{S})$ is the graph conductance of $\Set{S}$.
\end{prop}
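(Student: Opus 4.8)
The plan is to reduce the statement to a known bound on escape probabilities for lazy random walks with restart (equivalently, a bound on the escape mass of the personalized PageRank vector), which is a standard ingredient in local graph clustering. First I would set up notation: let $W$ be the lazy random walk transition matrix on $G$ (a step that stays put with probability $1/2$ and otherwise moves to a neighbor proportional to edge weight), and let $\Mat{p}_u$ be the stationary distribution of the walk that at each step, with probability $\alpha$, teleports back to $u$ and with probability $1-\alpha$ takes a lazy step. Then $\Mat{p}_u$ satisfies the fixed-point equation $\Mat{p}_u = \alpha\,\bm{1}_u + (1-\alpha)\,\Mat{p}_u W$, where $\bm{1}_u$ is the indicator vector of $u$. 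Summing this identity over the vertices in $\Set{S}$ gives $\Mat{p}_u(\Set{S}) = \alpha + (1-\alpha)\,(\Mat{p}_u W)(\Set{S})$, so the escape mass is $\Mat{p}_u(\Set{V}-\Set{S}) = 1 - \Mat{p}_u(\Set{S}) = (1-\alpha)\big(\Mat{p}_u(\Set{S}) - (\Mat{p}_u W)(\Set{S})\big)$, reducing everything to controlling how much probability mass leaks across the cut $\partial\Set{S}$ in a single lazy step.

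Next I would bound $\Mat{p}_u(\Set{S}) - (\Mat{p}_u W)(\Set{S})$, the net outflow from $\Set{S}$ in one lazy step, by the gross outflow across edges leaving $\Set{S}$. Because the walk is lazy, at a vertex $x$ at most half the probability moves, and the fraction that crosses to $\Set{V}-\Set{S}$ is at most $\tfrac12 \cdot d_{\partial}(x)/d(x)$ where $d_{\partial}(x)$ is the weight of edges from $x$ to the complement. Hence the outflow is at most $\tfrac12 \sum_{x\in\Set{S}} \Mat{p}_u(x)\, d_{\partial}(x)/d(x)$. To turn this into the conductance $\Phi(\Set{S}) = \frac{\mathrm{vol}(\partial\Set{S})}{\min(\mathrm{vol}(\Set{S}),\mathrm{vol}(\Set{V}-\Set{S}))}$, I would use the fact that the personalized PageRank vector, relative to the degree distribution, does not concentrate too badly: the standard argument (going back to the Andersen–Chung–Lang analysis of PageRank-Nibble) bounds $\sum_{x\in\Set{S}} \Mat{p}_u(x)\, d_{\partial}(x)/d(x)$ by $\Phi(\Set{S})$ via a worst-case/averaging estimate over $\Set{S}$, since $\Mat{p}_u$ restricted to $\Set{S}$ has total mass at most $1$. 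Combining, $\Mat{p}_u(\Set{S}) - (\Mat{p}_u W)(\Set{S}) \le \tfrac12\Phi(\Set{S})$, and multiplying by $(1-\alpha)/\alpha$ after rearranging the fixed-point identity (the $\alpha$ in the denominator appears because $\Mat{p}_u(\Set{S}) \ge \alpha$ bounds how the net flow relates to a per-step quantity) yields the claimed $\frac{1-\alpha}{2\alpha}\Phi(\Set{S})$.

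The main obstacle I anticipate is the averaging step that produces the conductance: bounding $\sum_{x\in\Set{S}}\Mat{p}_u(x)\,d_\partial(x)/d(x)$ cleanly by $\Phi(\Set{S})$ rather than by something weighted by the maximum of $\Mat{p}_u(x)/d(x)$ requires the key monotonicity/Lovász–Simonovits-style property of lazy random walks (that the ``sweep'' curve of $\Mat{p}_u$ lies below the corresponding curve for the stationary degree distribution), or alternatively a direct appeal to the cited local-clustering lemma. I would handle this by invoking the Andersen–Chung–Lang escape-mass lemma as a black box, checking only that our RWR process matches their personalized PageRank setup (restart probability $\alpha$, lazy walk) and that the factor of $1/2$ from laziness is carried through correctly; the remaining steps are the elementary algebra with the fixed-point equation outlined above.
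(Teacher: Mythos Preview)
Your route is genuinely different from the paper's, and it has a real gap at exactly the place you flagged. The paper does not manipulate the fixed-point equation together with a single-step outflow bound. Instead it expands the stationary vector as a geometric series of lazy-walk distributions,
\[
\Mat{p}_u=\alpha\sum_{l\ge 0}(1-\alpha)^l\,\Mat{q}_u^{(l)},
\]
applies a cited step-wise escape bound for the lazy walk, $\Mat{q}_u^{(l)}(\Set{V}-\Set{S})\le l\,\Phi(\Set{S})/2$ (their Theorem~1), term by term, and then sums the elementary series $\alpha\sum_{l\ge 0}l(1-\alpha)^l=(1-\alpha)/\alpha$ to produce the factor $\tfrac{1-\alpha}{2\alpha}$. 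All of the conductance content is outsourced to that lazy-walk lemma; what remains is only the power-series identity.

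Two issues in your argument. First, an algebraic slip: from the fixed-point equation (using $u\in\Set{S}$) one obtains exactly
\[
\Mat{p}_u(\Set{V}-\Set{S})=\frac{1-\alpha}{\alpha}\bigl(\Mat{p}_u(\Set{S})-(\Mat{p}_u W)(\Set{S})\bigr),
\]
not $(1-\alpha)(\cdots)$ as you wrote; the $1/\alpha$ falls out of the identity directly, not via the inequality $\Mat{p}_u(\Set{S})\ge\alpha$. Second, and more importantly, the step you call the ``main obstacle'' does not go through in either of the ways you propose. The averaging bound $\sum_{x\in\Set{S}}\Mat{p}_u(x)\,d_\partial(x)/d(x)\le\Phi(\Set{S})$ is simply false when $\Mat{p}_u$ is concentrated at a boundary vertex of $\Set{S}$ (take $u\in\partial\Set{S}$ and $\alpha$ close to $1$: the left side is of order $1$, while $\Phi(\Set{S})$ can be made arbitrarily small), so ``$\sum\Mat{p}_u\le 1$ plus worst-case over $\Set{S}$'' cannot yield $\Phi(\Set{S})$. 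And invoking ``the Andersen--Chung--Lang escape-mass lemma as a black box'' is circular, since that lemma \emph{is} the statement of this proposition. A Lov\'asz--Simonovits argument would close the gap, but that is substantially more than you have sketched; the paper's geometric-series route sidesteps the whole difficulty by reducing to the $l$-step lazy-walk bound and an elementary sum.
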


\begin{proof}
We first introduce the definition of graph conductance~\citep{sima2006} and Lazy Random Walk~\citep{lc2013}:

\noindent \textit{Graph Conductance.} For an undirected graph $G=(\Set{V},\Set{E})$, the graph volume of a node set $\Set{S}\subset\Set{V}$ is defined as $\text{vol}(\Set{S})=\sum_{v\in\Set{S}}d(v)$, where $d(v)$ is the degree of node $v$. The edge
boundary of a node set is defined to be
$\partial(\Set{S})=\{(x, y)\in\Set{E}|x\in\Set{S},y\notin\Set{S}\}$. The conductance of $\Set{S}$ is calculated as followed:
\begin{align}
\Phi(\Set{S})=\frac{|\partial(\Set{S})|}{\min(\text{vol}(\Set{S}), \text{vol}(\Set{V}-\Set{S}))}
\end{align}

\noindent \textit{Lazy Random Walk.} Lazy Random Walk~(LRW) is a variant of Random Walk, which first starts at a node, then stays at the current position with a probability of 1/2 or travels to a neighbor. The transition matrix of a lazy random walk is $\Mat{M}\triangleq(\Mat{I}+\Mat{A}\Mat{D}^{-1})/2$, where the $\Mat{I}$ denotes the identity matrix, $\Mat{A}$ is the adjacent matrix, and $\Mat{D}$ is the degree matrix. The $K$-th step Lazy Random Walk distribution starting from a node $u$ is defined as $\Mat{q}^{(K)}\leftarrow\Mat{M}^K\Mat{1}_u$. 

We then present a theorem that relates the Lazy Random Walk to graph conductance, which has been proved in \citet{lc2013}:

\begin{thm}\label{equ:theory1}
For all $K\ge 0$ and $\Set{S}\subset\Set{V}$, the probability that a $K$-step Lazy Random Walk starting at $u\in\Set{S}$ escapes $\Set{S}$ satisfies $\Mat{q}^{(K)}(\Set{V}-\Set{S})\le K\Phi(\Set{S})/2$.
\end{thm}

Theorem~\ref{equ:theory1} guarantees that given a non-empty node set $\Set{S}\subset\Set{V}$ and a start node $u\in\Set{S}$, the Lazy Random Walker will be more likely stuck at $\Set{S}$. 
Here we extend the LRW to Lazy Random Walk with Restart~(LRWR) which will return to the start node with probability $\alpha$ or perform Lazy Random Walk.
According to the previous studies~\citep{page1999pagerank,avrachenkov2014pagerank,chung2010pagerank,tong2006}, we can obtain a stationary distribution $\Mat{p}_u$ by recursively performing Lazy Random Walk with Restart, which can be formulated as a linear system:
\begin{equation}
\begin{aligned}
\Mat{p}_u=\alpha\Mat{1}_u+(1-\alpha)\Mat{M}\Mat{p}_u
\end{aligned}
\end{equation}
where $\alpha$ denotes the restart probability. 
$\mathbf{p}_u$ can be expressed as a geometric sum of Lazy Random Walk~\citep{chung2010finding}: 
\begin{equation}
\begin{aligned}
\Mat{p}_u=\alpha\sum_{l=0}^{\infty}(1-\alpha)^{l}\mathbf{M}^l\mathbf{1}_u=\alpha\sum_{l=0}^{\infty}(1-\alpha)^{l}\mathbf{q}_u^{(l)}
\end{aligned}
\end{equation}
Applying Theorem~\ref{equ:theory1}, we have: 
\begin{equation}
\begin{aligned}
\label{equ:p2_pf}
\sum_{v\in(\Set{V}-\Set{S})}\Mat{p}_u(v)
&=\alpha\sum_{l=0}^{\infty}\sum_{v\in(\Set{V}-\Set{S})}(1-\alpha)^{l}\Mat{q}_u^{(l)}(v) \\
&\le \alpha\sum_{l=0}^{\infty}l(1-\alpha)^l\Phi(\Set{S})/2 \\
&=\frac{1-\alpha}{2\alpha}\Phi(\Set{S})
\end{aligned}
\end{equation}
where the element $\Mat{p}_u(v)$ represents the probability of the walker starting at $u$ and ending at $v$. The desired result is obtained by comparing two sides of \eqref{equ:p2_pf}.
\end{proof}

The only difference between Lazy Random with Restart and Random Walk with Restart is that the former has a probability of remaining in the current position without taking action. They are equivalent when sampling a predetermined number of nodes. 

Overall, Proposition~\ref{prop:rwr} indicates that the probability of RWR escaping from a local cluster~\citep{lc2006,lc2013} can be bounded by the graph conductance~\citep{sima2006} and the restart probability $\alpha$. Besides, RWR can exhibit a mixture of two straightforward sampling methods Breadth-first Sampling (BFS) and Depth-first Sampling (DFS)~\citep{grover2016node2vec}:
\begin{itemize}[leftmargin=*]
\item BFS collects all of the current node's immediate neighbors, then moves to its neighbors and repeats the procedure until the number of collected instances reaches batch size.
\item DFS randomly explores the node branch as far as possible before the number of visited nodes reaches batch size.
\end{itemize}
Specifically, higher $\alpha$ indicates that the walker will approximate BFS behavior and sample within a small locality, while a lower $\alpha$ encourages the walker to explore nodes further away, like in DFS.

\subsection{Discussion on \model} 
As shown in Algorithm~\ref{pipeline}, 
\model serves as a mini-batch sampler and can be easily plugged into any in-batch contrastive learning methods.
Specifically, during the training process, \model first constructs the \graph, which will be updated after $t$ training steps, then selects a start node at random and samples a mini-batch on \graph by RWR.

\begin{algorithm}[t]
    \caption{Constrative Learning with \model}
    \label{pipeline}
    \KwIn{Dataset $\Set{D}=\{x_i|i=1,\cdots,N\}$,
    Encoder $f(\cdot)$, Batchsize $B$, Graph update interval $t$, Modality-specific augmentation functions $\mathcal{T}$.
    \\ }
    \For{iter $\leftarrow$ $0,1,\cdots$}{
        \tcbset{boxrule = 0.25mm, colback=red!5!white, colframe=red!75!black}
        \begin{tcolorbox}[colback=red!5!white,colframe=red!75!black, width=2.8in]
            \vspace{-0.2cm}
            \textcolor{comment}{// \model} \\
            \If{iter$\%t == 0$}{
                \textcolor{comment}{// Proximity Graph Construction} \\
              {Build the \graph $G$ by Algorithm~\ref{dknn}.}
             }
            {\textcolor{comment}{// Proximity Graph Sampling} \\}
            {Perform sampling in $G$ by Algorithm~\ref{rwr}.}
            \vspace{-0.2cm}
        \end{tcolorbox}
        \vspace{-0.1cm}
        \textcolor{comment}{// Standard Contrastive Pipeline in Different Modalities} \\
        {Load the mini-batch $\{x_i\}_B$.} \\
        {Obtain positive pairs $\{(x_i, x^+_i)\}_B$ by augmentations $\mathcal{T}$.} \\
        {Generate representations $\{(\Mat{e}_i, \Mat{e}^+_i)\}_B$ by Encoder $f(\cdot)$.} \\
        {Compute the loss by treating $\{(\Mat{e}_i, \Mat{e}_j)\}^{i\ne j}$ as negatives.} \\
        {Update the parameters of $f(\cdot)$.} \\
    }
\end{algorithm}

\vpara{Connects to the Uniform and kNN Samplers.} As shown in Figure~\ref{fig:per_sampler}, the number of candidates $M$ and the restart probability $\alpha$ are the key to flexibly controlling the hardness of a sampled batch. When we set $M$ as the size of dataset and $\alpha$ as 1, \graph is equivalent to kNN graph and graph sampler will only collect the immediate neighbors around a central node, which behaves similarly to a kNN Sampler.
On the other hand, if $M$ is set to 1 and $\alpha$ is set to 0, the RWR degenerates into the DFS and chooses the neighbors that are linked at random, which indicates that \model performs as a Uniform Sampler.
We provide an empirical criterion of choosing $M$ and $\alpha$ in Section~\ref{sec:criterion}.

\vpara{Complexity.} The time complexity of building a \graph is $O(NMd)$ where $N$ is the dataset size, $M$ is the candidate set size and $d$ denotes the embedding size. It is practically efficient since usually $M$ is much smaller than $N$, and the process can be accelerated by embedding retrieval libraries such as Faiss~\citep{faiss2019}. The space cost of \model mainly comes from  graph construction and graph storage. The total space complexity of \model is $O(Nd+NK)$ where $K$ is the number of neighbors in the \graph.

\section{Experiments} \label{sec:experiment}

We plug \model into typical contrastive learning algorithms on three modalities, including vision, language, and graphs. Extensive experiments are conducted with 5 algorithms and 19 datasets, a total of 31 experimental settings. 
Additional experiments are reported in Appendix~\ref{sec:appendix_exp}, including batchsize $B$, neighbor number $K$, \graph update interval $t$, and the training curves. InfoNCE objective and its variants are described in Appendix~\ref{sec:infonce_loss}.
The statistics of the datasets are reported in Appendix~\ref{sec:dataset_detail}, and the detailed experimental setting can be found in Appendix~\ref{sec:experimental_settings}.

\subsection{Results}\label{sec:exp_combine}

\vpara{Results on Vision.} 
We first adopt SimCLR~\citep{simclr2020} and MoCo v3~\citep{chen2021empirical} as the backbone based on ResNet-50~\citep{resnet2016}. We start with training the model for 800 epochs with a batch size of 2048 for SimCLR and 4096 for MoCo v3, respectively. We then use linear probing to evaluate the representations on ImageNet. 
As shown in Table~\ref{tab:imagenet_result}, our proposed model can consistently boost the performance of original SimCLR and MoCo v3, demonstrating the superiority of \model.
Besides, we evaluate \model on the other benchmark datasets: two small-scale (CIFAR10,
CIFAR100) and two medium-scale (STL10, ImageNet-100), which can be found in Appendix~\ref{sec:appendix_cv}.

\begin{table*}\footnotesize
\begin{minipage}[t]{0.3\textwidth}
    \centering
    \caption{Top-1 accuracy under the linear evaluation with the ResNet-50 backbone on ImageNet.}
    \label{tab:imagenet_result}
\setlength{\tabcolsep}{2mm}{
\renewcommand\arraystretch{1.15}
  \begin{threeparttable}
    \begin{tabular}{l|ccc} 
        \toprule
        Method   & 100 ep & 400 ep & 800 ep\\
        \midrule
         SimCLR     & 64.0  & 68.1 & 68.7 \\
         w/ \model   &  \textbf{64.7} &  \textbf{68.6} & \textbf{69.2} \\
         \hline
         \hline 
         MoCo v3   &  68.9 & 73.3 & 73.8 \\
         w/ \model  & \textbf{69.5}  & \textbf{73.7}  & \textbf{74.2}  \\
        \bottomrule
    \end{tabular}
    \begin{tablenotes}
            \footnotesize
            \item[*] Only conduct experiments on \model.  
          \end{tablenotes}
    \end{threeparttable}
    \vspace{-6.5pt}}
\end{minipage}\hfill
\begin{minipage}[t]{0.65\textwidth}
    \centering
    \caption{Overall performance comparison with the BERT backbone on STS tasks.}
    \label{tab:text}
\setlength{\tabcolsep}{2.5mm}{
\renewcommand\arraystretch{1.15}
\begin{tabular}{l|cccccccc}  
\toprule
 Method     & STS12 & STS13 &  STS14 & STS15 & STS16 & STS-B & SICK-R & Avg. \\
\midrule
  SimCSE-BERT$_{base}$ & 68.62  & 80.89    & 73.74 & 80.88    & 77.66 & \textbf{77.79} & \textbf{69.64} & 75.60    \\
  w/ kNN Sampler & 63.62 & 74.86 & 69.79 & 79.17 & 76.24 &    74.73     &     67.74      & 72.31  \\
  w/ \model         & \textbf{72.37} & \textbf{82.08} &\textbf{75.24}  &\textbf{83.10}    &\textbf{78.43} & 77.54 & 68.05  & \textbf{76.69}     \\
  \hline
  \hline
  DCL-BERT$_{base}$   & 65.22 & 77.89 & 68.94  & 79.88     & \textbf{76.72} & 73.89 & \textbf{69.54} & 73.15    \\      
  w/ kNN Sampler & 66.34 & 76.66 & 72.60 & 78.30 & 74.86 &    73.65     &     67.92      & 72.90  \\
  w/ \model     & \textbf{69.55} & \textbf{82.66} & \textbf{73.37}  & \textbf{80.40}   & 75.37 & \textbf{75.43} & 66.76 & \textbf{74.79}      \\
  \hline
  \hline 
  HCL-BERT$_{base}$   & 62.57  & 79.12 & 69.70   & 78.00  & 75.11    & 73.38 & 69.74 & 72.52  \\
  w/ kNN Sampler  & 61.12 & 75.73 & 68.43 & 76.64 & 74.78 &    71.22     &      68.04      & 70.85 \\
  w/ \model     &  \textbf{66.87} & \textbf{81.38} & \textbf{72.96} & \textbf{80.11} & \textbf{77.99} & \textbf{75.95} & \textbf{70.89} & \textbf{75.16}  \\
\bottomrule
\end{tabular}}
\end{minipage}
\end{table*}

\begin{small}
\begin{table*}[t!]
    \centering
    \caption{Accuracy on graph classification task under LIBSVM~\citep{libsvm2011} classifier.}
    \renewcommand{\arraystretch}{1.15}
    \begin{threeparttable}
    \setlength{\tabcolsep}{3mm}{
    \begin{tabular}{c|ccccccc}  
    \toprule
    Method & IMDB-B &  IMDB-M  & COLLAB & REDDIT-B  & PROTEINS & MUTAG & NCI1 \\
    \midrule
    GraphCL  &  70.90$\pm$0.53  & 48.48$\pm$0.38     & 70.62$\pm$0.23 & 90.54$\pm$0.25 &  74.39$\pm$0.45 & 86.80$\pm$1.34   & 77.87$\pm$0.41  \\
     w/ kNN Sampler & 70.72$\pm$0.35 & 47.97$\pm$0.97    & 70.59$\pm$0.14 & 90.21$\pm$.74 & 74.17$\pm$0.41 & 86.46$\pm$0.82    & 77.27$\pm$0.37  \\
     w/ \model        & \textbf{71.90$\pm$0.46} & \textbf{48.93$\pm$0.28} & \textbf{71.48$\pm$0.28} &    \textbf{90.88$\pm$0.16} & \textbf{75.04$\pm$0.67} & \textbf{87.78$\pm$0.93}  & \textbf{78.93$\pm$0.38}  \\ 
     \hline 
     \hline
    DCL   & 71.07$\pm$0.36  & 48.93$\pm$0.32 & \textbf{71.06$\pm$0.51}  & 90.66$\pm$0.29   & 74.64$\pm$0.48  & 88.09$\pm$0.93   & 78.49$\pm$0.48  \\
    w/ kNN Sampler & 70.94$\pm$0.19  & 48.47$\pm$0.35    & 70.49$\pm$0.37 & 90.26$\pm$1.03 & 74.28$\pm$0.17 & 87.13$\pm$1.40    & 78.13$\pm$0.52  \\
    w/ \model  & \textbf{71.32$\pm$0.17}  & \textbf{48.96$\pm$0.25}  &  70.44$\pm$0.35 &    \textbf{90.73$\pm$0.34} & \textbf{75.02$\pm$0.61} & \textbf{89.47$\pm$1.43}    & \textbf{79.03$\pm$0.32} \\
    \hline
    \hline
    HCL   & \textbf{71.24$\pm$0.36} & 48.54$\pm$0.51 & 71.03$\pm$0.45 & 90.40$\pm$0.42  & 74.69$\pm$0.42 & 87.79$\pm$1.10    & 78.83$\pm$0.67   \\
    w/ kNN Sampler & 71.14$\pm$0.44 & 48.36$\pm$0.93 & 70.86$\pm$0.74 & 90.64$\pm$0.51 & 74.06$\pm$0.44  & 87.53$\pm$1.37    & 78.66$\pm$0.48  \\
    w/ \model  & 71.20$\pm$0.38 &  \textbf{48.76$\pm$0.39} & \textbf{71.70$\pm$0.35}  &  \textbf{91.25$\pm$0.25}   & \textbf{75.11$\pm$0.63} & \textbf{88.31$\pm$1.29}    & \textbf{79.17$\pm$0.27}   \\
    \hline
    \hline
    MVGRL & 74.20$\pm$0.70 & 51.20$\pm$0.50 & -  & 84.50$\pm$0.60  & - & 89.70$\pm$1.10 & -  \\
    w/ kNN Sampler & 73.30$\pm$0.34 & 50.70$\pm$0.36  & - & 82.70$\pm$0.67 & - & 85.08$\pm$0.66 & - \\
    w/ \model & \textbf{76.70$\pm$0.35}  & \textbf{52.40$\pm$0.39} & - & \textbf{87.47$\pm$0.79} & - & \textbf{91.13$\pm$0.81} & -  \\
    \bottomrule
    \end{tabular}} 
    \begin{tablenotes}
        \footnotesize
        \item[*] The results not reported are due to the unavailable code or out-of-memory caused by the backbone model itself.
    \end{tablenotes}
    \end{threeparttable}
    \label{tab:graph}
\end{table*}
\end{small}

\vpara{Results on Language.} 
We evaluate \model on learning the sentence representations by SimCSE~\citep{simcse2021} framework with pretrained BERT~\citep{bert2018} as the backbone. The results of Table~\ref{tab:text} suggest that \model consistently improves the baseline models with an absolute gain of 1.09\%$\sim$2.91\% on 7 semantic textual similarity~(STS) tasks~\citep{sts2012,sts2013,sts2014,sts2015,sts2016,stsb2017,stsr2014}. Specifically, we observe that when applying DCL and HCL, the performance of the self-supervised language model averagely drops by 2.45\% and 3.08\% respectively. 
As shown in~\citet{zhou2022debiased} and Appendix~\ref{sec:score_comparison}, the pretrained language model offers a prior distribution over the sentences, 
leading to a high cosine similarity of both positive pairs and negative pairs.
So DCL and HCL, which leverage the similarity of positive and negative scores to tune the weight of negatives, are inapplicable because the
high similarity scores of positives and negatives will result in homogeneous weighting. However, the hard negatives explicitly sampled by our proposed \model can alleviate it, with an absolute improvement of 1.64\% on DCL and 2.64\% on HCL. The results of RoBERTa~\citep{liu2019roberta}  are reported in Appendix~\ref{sec:roberta}.

\vpara{Results on Graphs.} 
We test \model on graph-level classification task using GraphCL~\citep{graphcl2020} and MVGRL~\cite{hassani2020contrastive} frameworks.
Similar to language modality, we replace the original InfoNCE loss function in GraphCL with the DCL and HCL.
Table~\ref{tab:graph} reports the detailed performance comparison on 7 benchmark datasets: IMDB-B, IMDB-M, COLLAB, REDDIT-B, PROTEINS, MUTAG, and NCI1~\citep{yu2020graph,yu2021recognizing}.
We can observe that \model consistently boosts the performance of GraphCL and MVGRL, with an absolute improvement of $0.4\% \sim 2.9\%$ across all the datasets. 
Besides, equipped with \model, DCL and HCL can achieve better performance in 12 out of 14 cases.
It can also be found that \model can reduce variance in most cases, showing that the exploited hard negatives can enforce the model to learn more robust representations. 
We also compare \model with 11 graph classification models including the unsupervised graph learning methods~\cite{gin2018,ying2018hierarchical}, graph kernel methods~\cite{shervashidze2011weisfeiler,yanardag2015deep}, and self-supervised graph learning methods~\cite{gcc2020,graphcl2020,hassani2020contrastive,narayanan2017graph2vec,sun2019infograph,you2021graph,xu2021infogcl} (See Appendix~\ref{sec:graph_baselines}).

\subsection{Why does \model Perform Better?}\label{sec:sampler_compare}
In this section, we conduct experiments to investigate why \model performs better, using the example of vision modality. 
We evaluate SimCLR on CIFAR10 and CIFAR100, comparing the performance and false negatives of Uniform Sampler, kNN Sampler, and \model to gain a deeper understanding of \model. 
Figure~\ref{fig:per_sampler} presents the histogram of cosine similarity for all pairs within a sampled batch, and the corresponding  percentage of false negatives during training. 

The results indicate that although kNN Sampler can explicitly draw a data batch with similar pairs, it also leads to a substantially higher number of false negatives, resulting in a notable degradation of performance.
Uniform Sampler is independent of the model so the percentage of false negatives within the sampled batch remains consistent during training, but it fails to effectively sample the hard negatives.
\model can modulate $M$ and $\alpha$ to control the hardness of the sampled batch, which brings about the best balance between these two sampling methods, resulting in a mini-batch of hard-to-distinguish instances with fewer false negatives compared to the kNN Sampler.
Specifically, \model can sample hard mini-batch but only exhibits a slightly higher percentage of false negatives than Uniform Sampler with optimal parameter setting,
which enables \model to achieve the best performance. 
A similar phenomenon on CIFAR100 can be found in Appendix~\ref{sec:cifar10_sim_fn}.

\begin{table*}[t!]
    \centering
    \caption{Empirical criterion for \model on neighbor candidate size $M$ and restart probability $\alpha$.}
    \renewcommand{\arraystretch}{1.2}
    \setlength{\tabcolsep}{.9mm}{
    \begin{tabular}{c|c|ccccc|ccccc}
    \toprule
    \multirow{2}{*}{Modality} & \multirow{2}{*}{Dataset} & \multicolumn{5}{c|}{Neighbor \ Candidate \  Size \ $M$} & \multicolumn{5}{c}{Restart \ Probability \ $\alpha$} \\
    \cline{3-12}
    &  & 500 & 1000 & 2000 & 4000 & 6000 & 0.1 & 0.3 & 0.5 & 0.7 & 0.2$\sim$0.05 \\
    \midrule
    \multirow{4}{*}{Image}  & CIFAR10 & \textbf{92.54} & 92.49 & 91.83  & 91.72 & 91.43 & 92.41 & 92.26 & 92.12  & 92.06 & \textbf{92.54}  \\
    & CIFAR100 & 67.92 & \textbf{68.68} & 67.05 & 66.19 & 65.55 & 68.31 & 67.98 & 68.20  & 68.00 & \textbf{68.68} \\
    & STL10  & 84.16 & \textbf{84.38} & 82.80  & 81.91 & 80.92  & 83.01 & 80.69 & 83.93 & 82.56 & \textbf{84.38} \\
    & ImageNet-100 & 59.6 & \textbf{60.8} & 60.1  & 59.1 & 58.4 & 60.8 & 59.6 & 58.1 & 57.7 & \textbf{60.8} \\
    \hline
    Text & Wikipedia & 71.36 & \textbf{76.69} & 76.09   & 75.76 & 75.11  & 71.74 & 72.13 & 72.41 & \textbf{76.69} & -- \\
    \hline
    \multirow{4}{*}{Graph} & IMDB-B & \textbf{71.90$\pm$.46}  & 71.28$\pm$.51 & 71.13$\pm$.48 &  70.86$\pm$.56  & 70.68$\pm$.59 & 71.26$\pm$.29  & 71.00$\pm$.46 & 71.06$\pm$.21 & 70.78$\pm$.58   & \textbf{71.90$\pm$.46} \\
   & IMDB-M & \textbf{48.93$\pm$.28} & 48.68$\pm$.35  & 48.88$\pm$.94  & 48.71$\pm$.93 & 48.12$\pm$.75    & 48.48$\pm$1.07 & 48.27$\pm$.67 & 48.72$\pm$.41 & 48.78$\pm$.60  & \textbf{48.93$\pm$.28} \\ 
   & COLLAB  & 70.47$\pm$.33 & \textbf{71.48$\pm$.28} & 70.93$\pm$.50 & 70.46$\pm$.28 & 70.24$\pm$.56  & 70.36$\pm$.28 &  70.63$\pm$.53  & 70.63$\pm$.54 & 70.31$\pm$.37 & \textbf{71.48$\pm$.28}  \\
   & REDDIT-B & \textbf{90.88$\pm$.16} &  89.45$\pm$.99  & 90.64$\pm$.38 & 89.92$\pm$.75 & 90.37$\pm$.89 & 90.22$\pm$.38 & 89.51$\pm$.61 & 90.44$\pm$.48 & 90.28$\pm$.89  &\textbf{90.88$\pm$.16}  \\
    \bottomrule
    \end{tabular}} 
    \label{tab:criterion}
\end{table*}

\begin{figure}[h]
    \centering
    \includegraphics[width=0.48\textwidth]{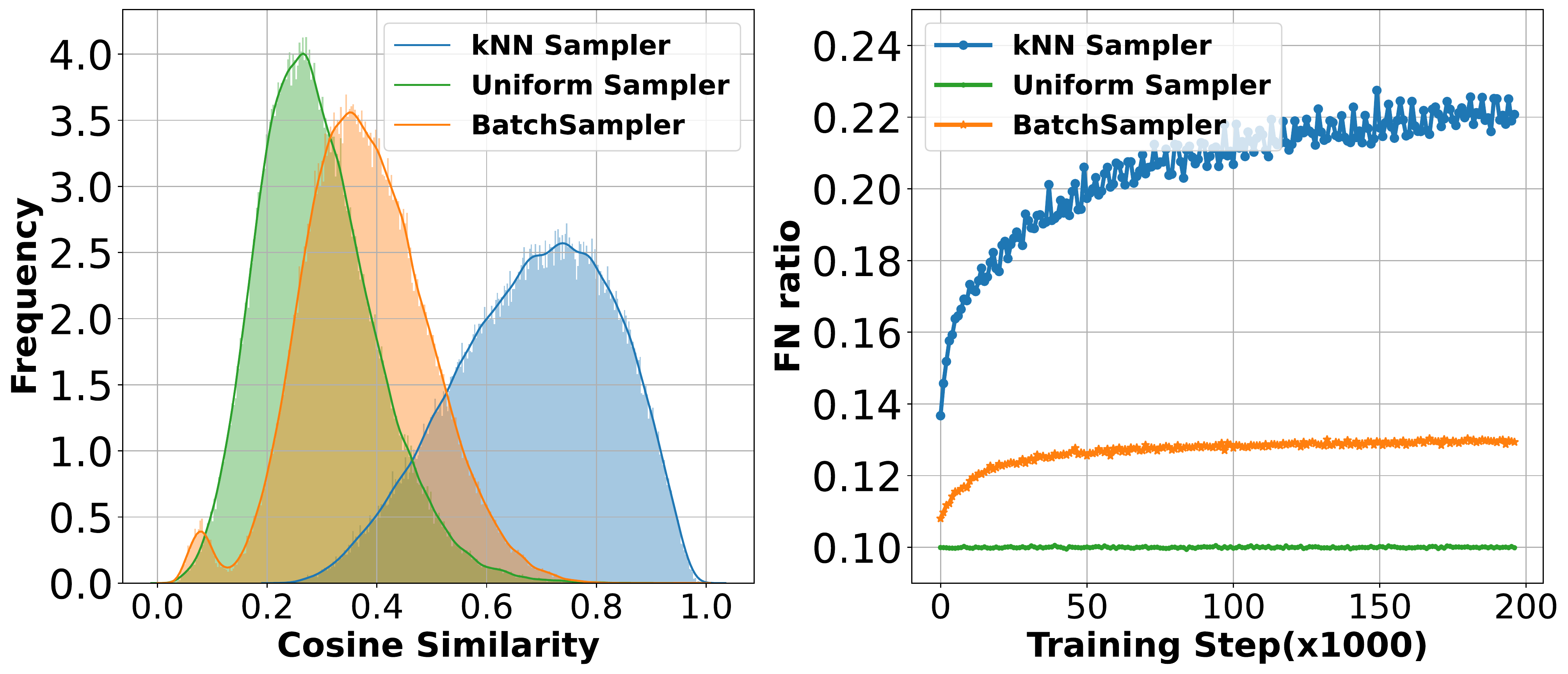}
    \vspace{-3mm}
     \caption{Cosine similarity and percentage of false negatives among various mini-batch sampling methods on CIFAR10.}
    \label{fig:per_sampler}
    \vspace{-2mm}
\end{figure}

\subsection{Empirical Criterion for \model}\label{sec:criterion}
\model modulates the hardness of the sampled batch by two important parameters $M$ and $\alpha$ to achieve better performance on three modalities.  
To analyze the impact of these, we vary the $M$ and $\alpha$ in the range of $\{500,1000,2000,4000,6000\}$ and $\{0.1,0.3,0.5,0.7\}$ respectively, and apply SimCLR, SimCSE and GraphCL as backones. 
We summarize the performance of \model with different $M$ and $\alpha$ in Table~\ref{tab:criterion}. It shows that in most cases, the performance of the model peaks when $M=1000$ but plumbs quickly with the increase of $M$. 
Such phenomena are consistent with the intuition that higher $M$ raises the probability of selecting similar instances as neighbors, but the sampler will be more likely to draw the mini-batch with false negatives, degrading the performance.

Besides, to better understand the effect of $\alpha$, we visualize the histograms of cosine similarity for all pairs from a sampled batch after training and plot the corresponding  percentage of false negatives during training on CIFAR10 (See Figure~\ref{fig:cos_sim}). 
We can observe that $\alpha$ moving from 0.1 to 0.7 causes cosine similarities to gradually skew left, but introduces more false negatives in the batch, creating a trade-off. This phenomenon indicates that the sampler with a higher $\alpha$ sample more frequently within a local neighborhood, which is more likely to yield similar pairs. 
A similar phenomenon can also be found on CIFAR100 (See Appendix~\ref{sec:restart_cifar100}). 
However, as training progresses, the instances of the same class tend to group together, increasing the probability of collecting false negatives. 
To find the best balance, we linearly decay $\alpha$ from 0.2 to 0.05 as the training epoch increases, which is presented as $0.2\sim 0.05$ in Table~\ref{tab:criterion}. It can be found that this dynamic strategy achieves the best performance in all cases except SimCSE which only trains for one epoch.
Interestingly, SimCSE achieves the best performance by a large margin when $\alpha=0.7$ since hard negatives can alleviate the distribution issue brought by the pre-trained language model. 
More analysis can be found in Section~\ref{sec:exp_combine} and Appendix~\ref{sec:score_comparison}.

\vpara{Criterion.}
From the above analysis, the suggested $M$ would be 500 for the small-scale dataset, and 1000 for the larger dataset.
The suggested $\alpha$ should be relatively high, e.g., 0.7, for the pre-trained language model-based method. 
Besides, dynamic decay $\alpha$, e.g., 0.2 to 0.05, is the best strategy for the other methods.
Such an empirical criterion provides critical suggestions for selecting the appropriate $M$ and $\alpha$ to modulate the hardness of the sampled batch.

\begin{figure}[h]
    \centering
    \includegraphics[width=0.48\textwidth]{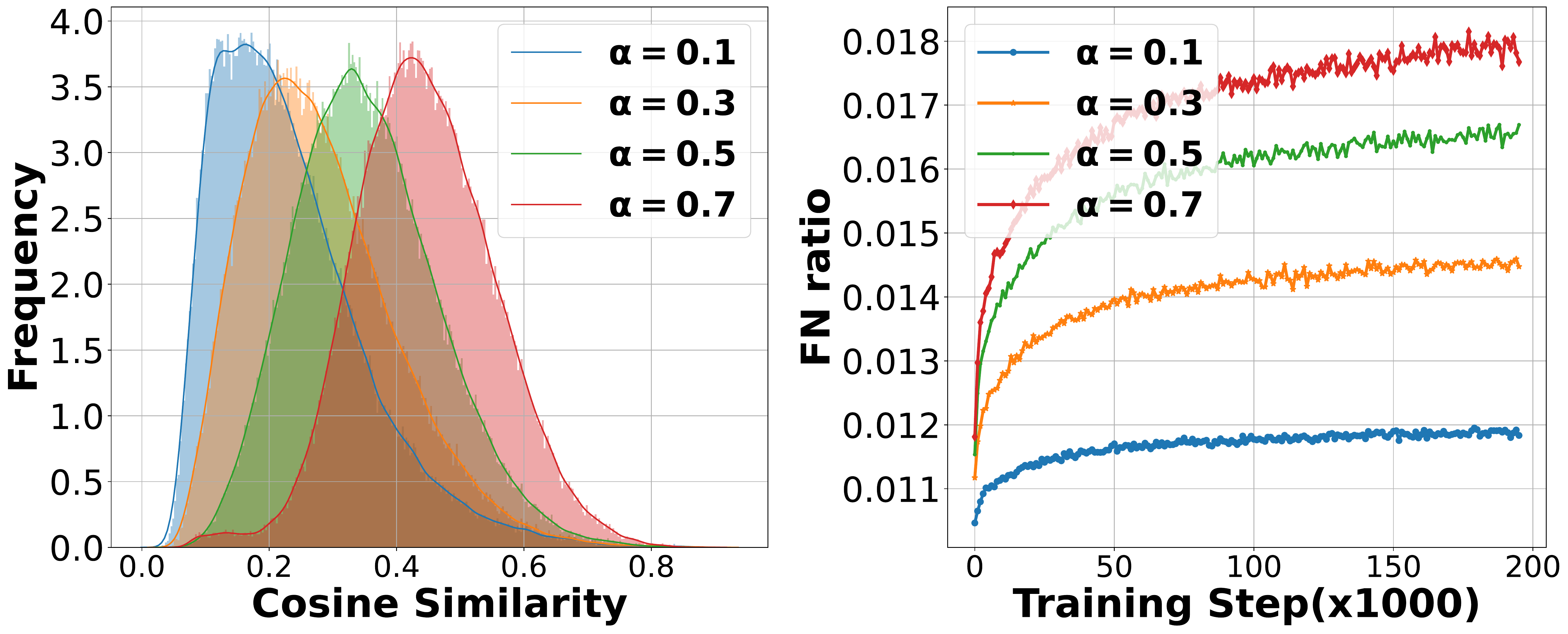}
    \vspace{-3mm}
    \caption{Cosine similarity and percentage of false negatives among various restart probabilities on CIFAR10.}
    \label{fig:cos_sim}
    \vspace{-5mm}
\end{figure}

\subsection{Efficiency Analysis}\label{sec:appendix_time}
Here, we introduce three metrics to analyze the time cost of mini-batch sampling by \model: (1) Batch Sampling Cost~(\textit{$\text{Cost}_{S}$}) is the average time of RWR taken to sample a mini-batch from a \graph; (2) Proximity Graph Construction Cost~(\textit{$\text{Cost}_{G}$}) refers to the time consumption of \model for constructing a \graph; 
(3) Batch Training Cost~(\textit{$\text{Cost}_{T}$}) is the average time taken by the encoder to forward and backward; 
(4) Proximity Graph Construction Amortized Cost~(\textit{$\text{Cost}_{G/t}$}) is the ratio of \textit{$\text{Cost}_{G}$} to the graph update interval $t$.
The time cost of \model is shown in Table~\ref{tab:timecost_iter}, from which we make the following observations:~(1) Sampling a mini-batch \textit{$\text{Cost}_S$} takes an order of magnitude less time than training with a batch \textit{$\text{Cost}_T$} at most cases.
(2) Although it takes $100 s$ for \model to construct a \graph in ImageNet, the cost shares across $t$ training steps, which take only \textit{$\text{Cost}_{G/t}=0.2$} per batch. A similar phenomenon can be found in other datasets as well. 
In particular, SimCSE trains for one epoch, and \graph is built once.

\begin{small}
    \begin{table}[h]
    \caption{The time cost of mini-batch sampling by \model on an NVIDIA V100 GPU.}
    \centering
    \renewcommand{\arraystretch}{1.15}
    \setlength{\tabcolsep}{1.mm}{
    \begin{tabular}{c|cccc}  
    \toprule
        Metric & STL10 & ImageNet-100 & Wikipedia & ImageNet  \\
        \midrule
         $\text{Cost}_{S}$   & 0.013s  & 0.015s & 0.005  & 0.15s \\ 
         {$\text{Cost}_{G}$}  & 2s  & 3s & 79s & 100s \\ 
         $\text{Cost}_{T}$   & 0.55s  & 1.1s  &  0.08s  &  1.1s \\ 
         \hline
         {$\text{Cost}_{G/t}$} & 0.02($t=100$) & 0.03($t=100$)  & 0.005($t=15625$) & 0.2($t=500$) \\ 
   \bottomrule
    \end{tabular}
    } 
     \label{tab:timecost_iter}
    \end{table}
\end{small}

\subsection{Further Analysis}

\vpara{Strategies of Proximity Graph Sampling.}
We conduct experiments to explore different choices of graph sampling methods, including (1)~Depth First Search~(DFS); (2)~Breadth First Search~(BFS); (3)~Random Walk~(RW); (4)~Random Walk with Restart~(RWR). 
Table~\ref{tab:sampling_result} presents an overall performance comparison with different graph sampling methods. 
As expected, RWR consistently achieves better performance since it samples hard yet true negatives within a local cluster on \graph.
Besides, we illustrate the histograms of cosine similarity for all pairs from a sampled batch and plot the corresponding percentage of false negatives during training in Figure~\ref{fig:graph_sampling}.
It can be observed that although BFS brings the most similar pairs in the mini-batch, it performs worse than the original SimCLR since it introduces substantial false negatives. 
While having a slightly lower percentage of false negatives than RWR, DFS, and RW do not exhibit higher performance since they are unable to collect the hard negatives in the mini-batch.
The restart property allows RWR to exhibit a mixture of DFS and BFS, which can flexibly modulate the hardness of the sampled batch and find the best balance between hard negatives and false negatives.

\begin{small}
\begin{table}[t!]
    \centering
    \caption{Overall performance comparison with different graph sampling methods.}
    \renewcommand{\arraystretch}{1.15}
    \setlength{\tabcolsep}{1.0mm}{
    \begin{tabular}{c|ccc|c|cc}
    \toprule
    \multirow{2}{*}{Method} & \multicolumn{3}{c|}{Vision} & Language & \multicolumn{2}{c}{Graphs} \\
     & CIFAR10 & CIFAR100 & STL10 & Wikipedia & IMDB-B  & COLLAB \\
    \midrule
    BFS & 91.03 & 65.15 & 77.08 & 74.39 & 70.48 & 69.98 \\
    DFS & 92.14 & 68.29 & 83.05 & 73.40  & 71.12 & 70.60 \\
    RW  & 92.28 & 68.33 & 83.54 & 75.56 & 71.26 &  70.72 \\
    RWR & \textbf{92.54} & \textbf{68.68} & \textbf{84.38} & \textbf{76.69} & \textbf{71.90} & \textbf{71.48} \\
    \bottomrule
    \end{tabular}} 
    \label{tab:sampling_result}
\end{table}  
\end{small}

\begin{figure}[h]
    \centering
    \includegraphics[width=0.48\textwidth]{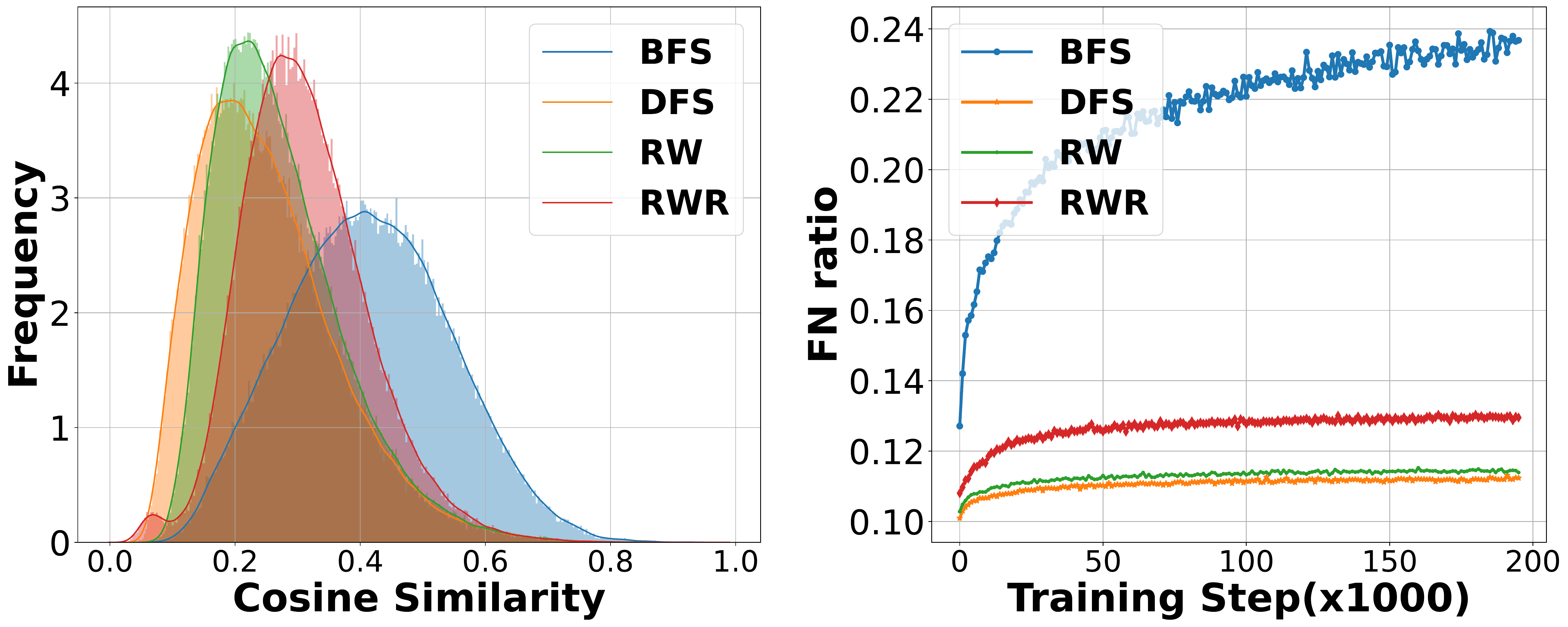}
    \caption{Histograms of cosine similarity and percentage of false negatives in a batch using different sampling methods.}    
    \label{fig:graph_sampling}
\end{figure}

\vpara{Proximity Graph vs. kNN Graph.}
To demonstrate the effectiveness of \graph, we do an ablation study by replacing \graph with kNN graph which directly selects $k$ neighbors with the highest scores for each instance from the whole dataset. 
The neighbor number $k$ is 100 by default.
The comparison results are shown in Table~\ref{tab:knngraph}, from which we can observe that \graph outperforms the kNN graph by a margin. \model with kNN graph even performs worse than the original contrastive learning method because of the false negatives.

\begin{small}
\begin{table}[hbpt]
    \centering
    \caption{Performance comparison of different graph construction methods.}
    \renewcommand{\arraystretch}{1.15}
    \setlength{\tabcolsep}{.8mm}{
    \begin{tabular}{c|cc|c|cc}
    \toprule
    \multirow{2}{*}{Method} & \multicolumn{2}{c|}{Vision} & Language & \multicolumn{2}{c}{Graphs}  \\
     & CIFAR10 & CIFAR100 & Wikipedia & IMDB-B  & COLLAB \\
    \midrule
    Default & 92.13  & 68.14 & 75.60 & 70.90 & 70.62 \\
    kNN graph & 90.47 & 62.67 & 75.13 & 70.10 & 69.96  \\
    \graph  & \textbf{92.54} & \textbf{68.68} & \textbf{76.69} & \textbf{71.90} & \textbf{71.48} \\
    \bottomrule
    \end{tabular}} 
    \label{tab:knngraph}
\end{table}  
\end{small}

To develop an intuitive understanding of how proximity graph alleviates the false negative issue, Figure~\ref{fig:knngraph} plots the changing curve of false negative ratio in a batch. 
The results show that the proximity graph 
could discard the false negative significantly: 
by the end of the training, kNN will introduce more than 22\% false negatives in a batch, while the proximity graph brings about 13\% on CIFAR10. 
A similar phenomenon can also be found on CIFAR100.

\begin{figure}[hbpt]
    \centering
    \includegraphics[width=0.48\textwidth]{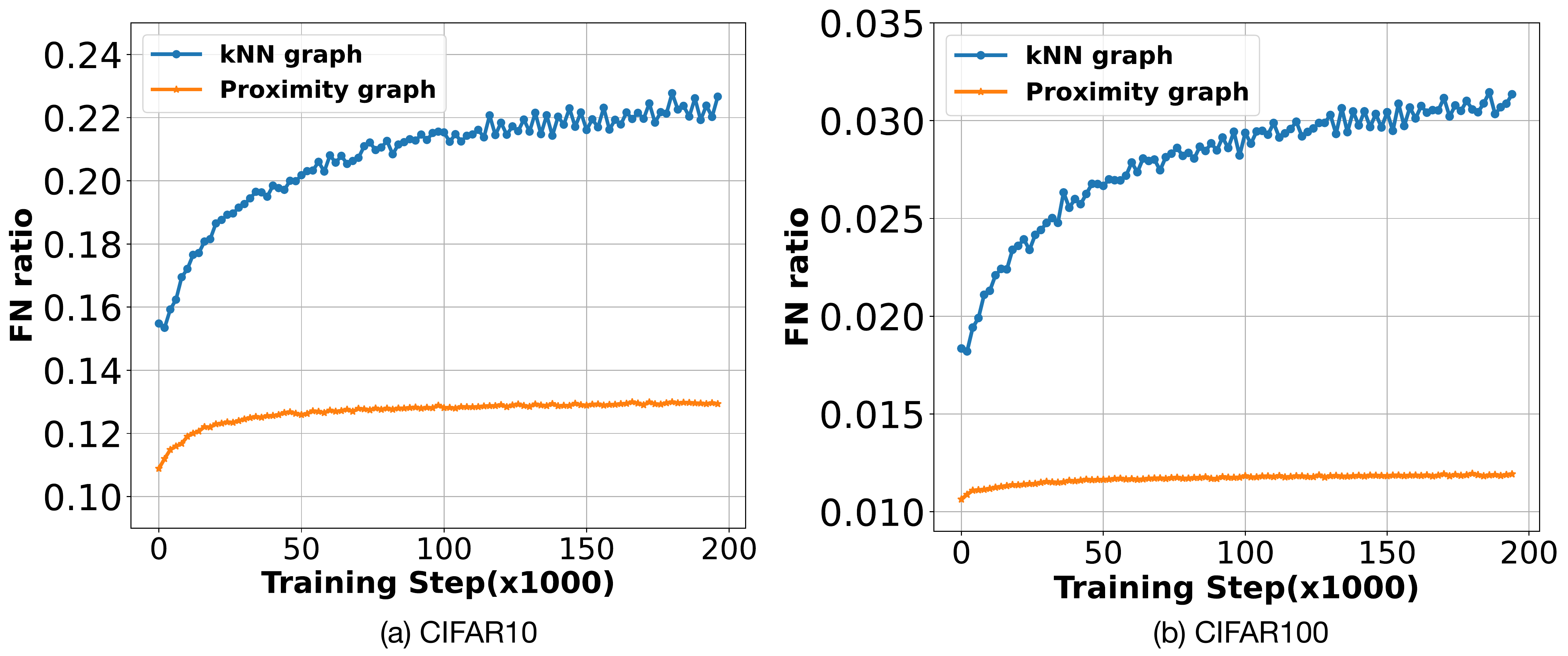}
    \vspace{-10pt}
    \caption{Percentage of false negatives using different graph building methods over the training step.}
    \label{fig:knngraph}
    \vspace{-7pt}
\end{figure}

\section{Conclusion}  \label{sec:conclusion}
In this paper, 
we study the problem of mini-batch sampling for in-batch contrastive learning, which aims to globally sample mini-batches of hard-to-distinguish instances. To achieve this, we propose \model to perform mini-batch sampling as a walk on the constructed \graph. 
Specifically, we design the \graph to control the pairwise similarity among instances and leverage random walk with restart (RWR) on the \graph to form a batch.
We theoretically and experimentally demonstrate that \model can balance kNN Sampler and Uniform Sampler. Besides, we conduct experiments with 5 representative contrastive learning algorithms on 3 modalities (e.g. vision, language, and graphs) to evaluate our proposed \model, demonstrating that \model can consistently achieve performance improvements.

\clearpage

\vpara{Acknowledgements.} 
This work was supported by Technology and Innovation Major Project of the Ministry of Science and Technology of China under Grant 2020AAA0108400 and 2020AAA0108402, NSF of China for Distinguished Young Scholars (61825602), NSF of China (62276148), and a research fund from Zhipu.AI.

\balance

\bibliographystyle{ACM-Reference-Format}
\bibliography{mybib}

\appendix
\clearpage

\section{Algorithm Details}\label{sec:algorithm}

Here we present the details of Proximity Graph Construction and Random Walk with Restart, as shown in Algorithm~\ref{dknn} and Algorithm~\ref{rwr} respectively.

\setlength\intextsep{12pt}
\begin{algorithm}[h]
    \caption{Proximity Graph Construction}
    \label{dknn}
    \KwIn{Dataset $\Set{D}=\{x_i\}$, Candidate set size $M$, Neighbor number $K$; \\ }
    \KwOut{A \graph $G$;}
    \For{$v$ in $\Set{D}$}{
        {Randomly select $M$ neighbor candidates from $\Set{D}$}; \\
        {Select the $K$ closest candidates $\Set{N}_v$ by Eq.~\ref{equ:eq_G}}; \\
        {$G[v]\leftarrow\Set{N}_v$};
    }
    \Return $G$
\end{algorithm}

\begin{algorithm}[h]
    \caption{Random Walk with Restart(RWR)}
    \label{rwr}
    \KwIn{Proximity graph $G=\{\Set{V},\Set{E}\}$, seed node $u$, restart probability $\alpha$, number of sampled node $B$; \\ }
    \KwOut{A sampled node set $\Set{S}$;}
    $\Set{S} \leftarrow \{\}, v \leftarrow u$; \\
    \While{$\text{len}(\Set{S}) < B$}{
	    \If{$v$ not in $\Set{S}$}{
	        $\Set{S}$.insert($v$)
	    }
	    Sample $r$ from Uniform distribution $U(0,1)$; \\
	    \If{$r< \alpha$}{
	        $v \leftarrow u$;
	    }
	    \Else{
	        {Randomly sample $\hat{v}$ from $v$'s neighbors}; \\
	        {$v \leftarrow \hat{v}$};
	    }
	}
    \Return{$\Set{S}$}
\end{algorithm}

\section{Additional Experiments}\label{sec:appendix_exp}

\subsection{Extensive studies on Vision}\label{sec:appendix_cv}

Here we evaluate the \model on two small-scale~(CIFAR10, CIFAR100) and two medium-scale~(STL10, ImageNet-100) benchmark datasets, and equip DCL~\citep{debias2021} and HCL~\citep{hard2021}\footnote{DCL and HCL are more like variants of InfoNCE loss, which adjust the weights of negative samples in the original InfoNCE loss.}
with \model to investigate its generality.
Experimental results in Table~\ref{tab:image} show that \model can consistently improve SimCLR and its variants on all the datasets, with an absolute gain of 0.3\%$\sim$2.5\%. 
We also can observe that the improvement is greater on medium-scale datasets than on small-scale datasets.
Specifically, the model equipped with HCL and \model achieves a significant improvement~(6.23\%) on STL10 over the original SimCLR.

\subsection{Experiments with Roberta on Language} \label{sec:roberta}
We also apply \model to the SimCSE with the pretrained RoBERTa~\cite{liu2019roberta} and present the results in Table~\ref{tab:text_roberta}. Similar to the results of BERT, \model can consistently improve the performance of the baseline model. 
Besides, as discussed in Section~\ref{sec:exp_combine}
and Section~\ref{sec:score_comparison}, 
the hard negative sampled by \model explicitly can alleviate the low distribution gap between positive score and negative score distribution caused by the pretrained language model, alleviating the performance degradation of DCL and HCL.

\begin{small}
    \begin{table}[h]
    \caption{Overall performance comparison on image classification task in terms of Top-1 Accuracy.}
    \centering
    \renewcommand{\arraystretch}{1.15}
    \setlength{\tabcolsep}{2mm}{
    \begin{tabular}{l|cccc}  
    \toprule
     Method     & CIFAR10  &  CIFAR100  & STL10 & ImageNet-100 \\
    \midrule
     SimCLR   & 92.13   &  68.14 & 83.26 & 59.30 \\
     w/ kNN Sampler & 90.16  & 62.30  & 79.25  & 57.70 \\
      w/ \model        & \textbf{92.54}  &\textbf{68.68} &  \textbf{84.38} & \textbf{60.80} \\
     \hline      
     \hline
     DCL   & 92.28  &  68.52 & 84.92 & 59.90 \\
     w/ \model  & \textbf{92.74}  & \textbf{68.91} & \textbf{86.39}  & \textbf{60.14} \\
     \hline 
     \hline
    HCL   & 92.39   & 68.92  & 88.20  & 60.60 \\ 
    w/ \model & \textbf{92.41}  & \textbf{69.13} & \textbf{89.49} & \textbf{61.50} \\
    \bottomrule
    \end{tabular}} 
    \label{tab:image}
    \end{table}
\end{small}

\begin{small}
    \begin{table}[h]
    \caption{Performance comparison for sentence embedding learning based on RoBERTa.}
    \centering
    \renewcommand{\arraystretch}{1.15}
    \setlength{\tabcolsep}{0.25mm}{
    \begin{tabular}{l|cccccccc}  
    \toprule
     Method     & STS12 & STS13 &  STS14 & STS15 & STS16 & STS-B & SICK-R & Avg. \\
    \midrule
      SimCSE-RoBERTa$_{base}$  & 67.90 & 80.91 & 73.14 & 80.58 & 80.74 & 80.26 & 69.87 & 76.20 \\ 
      w/ kNN Sampler & 68.78 & 79.49 & 73.34 & 81.05 & 80.15 &    77.09     &      67.18      & 75.30 \\
      w/ \model         & \textbf{68.29} & \textbf{81.96} &\textbf{73.86}  &\textbf{82.16}    &\textbf{80.94} & \textbf{80.77} & \textbf{69.30} & \textbf{76.75}      \\
     \hline 
      DCL-RoBERTa$_{base}$   & \textbf{66.60} & 79.16 & \textbf{71.05}  &   80.40  & 77.76 & \textbf{77.94} & \textbf{67.57} & 74.35   \\
      w/ kNN Sampler & 65.39 & 79.04 & 69.71 & 78.37 & 75.98 &   74.72    &      64.39      & 72.51 \\
      w/ \model  & 65.53 & \textbf{80.09} & 71.00  &\textbf{80.64}    & \textbf{78.35}   & 77.75 & 67.52 & \textbf{74.41}   \\
      \hline
      HCL-RoBETa$_{base}$   & \textbf{67.20} & 80.47 & 72.44  &  80.88  &  80.57 & 78.79 & 67.98 & 75.49    \\
      w/ kNN Sampler & 65.99 & 77.32 & 73.71 & 80.59 & 79.78 &  77.70  &      65.40    & 74.36 \\
      w/ \model & 66.01 & \textbf{80.79} & \textbf{73.58}  & \textbf{81.25}    & \textbf{80.66} & \textbf{79.22} & \textbf{68.52} & \textbf{75.72}      \\
    \bottomrule
    \end{tabular}} 
    \label{tab:text_roberta}
    \end{table}
\end{small}

\begin{table*}[t!]
    \centering
    \caption{Experiment results for graph classification task under LIBSVM~\citep{libsvm2011} classifier.}
    \renewcommand{\arraystretch}{1.2}
    \setlength{\tabcolsep}{1.mm}{
    \begin{tabular}{c|c|ccccccc}  
    \toprule
    Method & Dataset & IMDB-B &  IMDB-M  & COLLAB & REDDIT-B  & PROTEINS & MUTAG & NCI1 \\
    \midrule
    \multirow{2}{*}{Supervised} & GIN & 75.1$\pm$5.1 & 52.3$\pm$2.8 & 80.2$\pm$1.9 & 92.4$\pm$2.5 &76.2$\pm$2.8 & 89.4$\pm$5.6 & 82.7$\pm$1.7 \\
    & DiffPool & 72.6$\pm$3.9 & -  &78.9$\pm$2.3 & 92.1$\pm$2.6 & 75.1$\pm$3.5 & 85.0$\pm$10.3   & - \\
     \hline 
     \hline
   \multirow{2}{*}{Graph Kernels} & WL & 72.30$\pm$3.44 & 46.95$\pm$0.46 &  - & 68.82$\pm$0.41 & 72.92$\pm$0.56  & 80.72$\pm$3.00  & 80.31$\pm$0.46 \\
   & DGK & 66.96$\pm$0.56 & 44.55$\pm$0.52 & - & 78.04$\pm$0.39  & 73.30$\pm$0.82 & 87.44$\pm$2.72 & 80.31$\pm$0.46 \\
    \hline
    \hline
    \multirow{8}{*}{Self-supervised} & graph2vec & 71.10$\pm$0.54 & 50.44$\pm$0.87 & - & 75.78$\pm$1.03  & 73.30$\pm$2.05 & 83.15$\pm$9.25  & 73.22$\pm$1.81 \\
    & infograph & 73.03$\pm$0.87 & 49.69$\pm$0.53  & 70.65$\pm$1.13 & 82.50$\pm$1.42 & 74.44$\pm$0.31 & 89.01$\pm$1.13  & 76.20$\pm$1.06 \\
    & JOAO & 70.21$\pm$3.08 & 49.20$\pm$0.77 & 69.50$\pm$0.36 & 85.29$\pm$1.35  & \underline{74.55$\pm$0.41}  & 87.35$\pm$1.02  & 78.07$\pm$0.47 \\
    & GCC & 72.0 & 49.4 & \textbf{78.9} & \underline{89.9} & - & - & -\\
    & InfoGCL & \underline{75.10$\pm$0.90} & \underline{51.40$\pm$0.80} & - & - & - & \textbf{91.20$\pm$1.30} & - \\
    \cline{2-9}
    \cline{2-9}
     & GraphCL & 70.90$\pm$0.53 & 48.48$\pm$0.38 & 70.62$\pm$0.23 & 90.54$\pm$0.25 &74.39$\pm$0.45 &  86.80$\pm$1.34 & 77.87$\pm$0.41 \\
    & GraphCL + \model & 71.90$\pm$0.46 & 48.93$\pm$0.28 & \underline{71.48$\pm$0.28} & \textbf{90.88$\pm$0.16}  & \textbf{75.04$\pm$0.67} &  87.78$\pm$0.93 & \underline{78.93$\pm$0.38} \\
    \cline{2-9}
    \cline{2-9}
    & MVGRL & 74.20$\pm$0.70 & 51.20$\pm$0.50 & - & 84.50$\pm$0.60 & - & 89.70$\pm$1.10 & - \\
    & MVGRL + \model & \textbf{76.70$\pm$0.35} & \textbf{52.40$\pm$0.39} & - & 87.47$\pm$0.79 & - & \underline{91.13$\pm$0.81} & -  \\   
    \bottomrule
    \end{tabular}}
    \label{tab:graph_baseline}
\end{table*}

\subsection{Comparison with baselines on Graphs}\label{sec:graph_baselines}

In Table~\ref{tab:graph_baseline}, we comprehensively compare different kinds of baselines on graph classification tasks, including the unsupervised graph learning methods~\cite{gin2018,ying2018hierarchical}, graph kernel methods~\cite{shervashidze2011weisfeiler,yanardag2015deep}, and self-supervised graph learning methods~\cite{gcc2020,graphcl2020,hassani2020contrastive,narayanan2017graph2vec,sun2019infograph,you2021graph,xu2021infogcl}. 
It can be found that InfoGCL achieves state-the-of-art performance among the self-supervised methods, and even outperforms the supervised methods on some datasets. 
\model can consistently improve the performance of both GraphCL and MVGRL on all the datasets, demonstrating the effectiveness of global hard negatives. Benefiting from the performance gain brought by \model, MVGRL achieves the best performance on 4 datasets, outperforming InfoGCL and supervised methods.

\subsection{Performance Analysis on CIFAR100}\label{sec:cifar10_sim_fn}

Here we compare the Uniform Sampler, kNN Sampler, and \model regarding cosine similarity, false negatives, and performance comparison on CIFAR100. Specifically, we show the histogram of cosine similarity for all pairs in a sampled batch, the false negative ratio of the mini-batch, and performance comparison between \model and other two sampling strategies (e.g. Uniform Sampler and kNN Sampler) in Figure~\ref{fig:sampler_sim_fn}. It can be found that 
\model exhibits a balance of Uniform Sampler and kNN Sampler, which can sample the hard negative pair but only brings a slightly greater number of false negatives than Uniform Sampler. 
More analysis can be found in Section~\ref{sec:sampler_compare}.

\begin{figure*}[t!]
    \centering
    \includegraphics[width=0.8\textwidth]{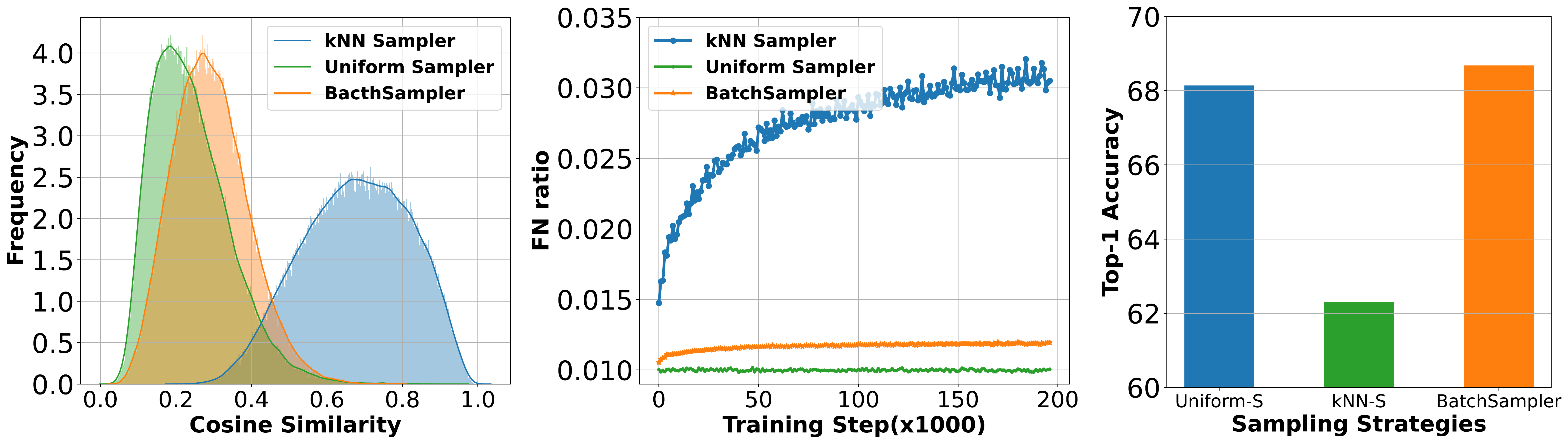}
	\caption{Cosine similarity, false negative ratio, and performance comparison on CIFAR100.}
	\label{fig:sampler_sim_fn}
\end{figure*}

\subsection{Restart Probability Analysis on CIFAR100}\label{sec:restart_cifar100}

Figure~\ref{fig:cos_sim_cifar100} shows cosine similarities and percentage of false negatives in the sampled batch among various restart probabilities $\alpha$ on CIFAR100. It can be observed that cosine similarities gradually skew left with the increase of $\alpha$ from $0.1$ to $0.7$, indicating the higher $\alpha$ brings about more hard negative instances similar to positive anchor. However, higher $\alpha$ suffers from more serious FN issues, which significantly increases the percentage of false negatives in the sampled batch. Thus, we linearly decay $\alpha$ from $0.2$ to $0.05$, achieving a trade-off between hard negatives and false negatives. More analyses are discussed in Section~\ref{sec:criterion}.

\begin{figure}[h]
    \centering
    \subfigure{
    \includegraphics[width=0.23\textwidth]{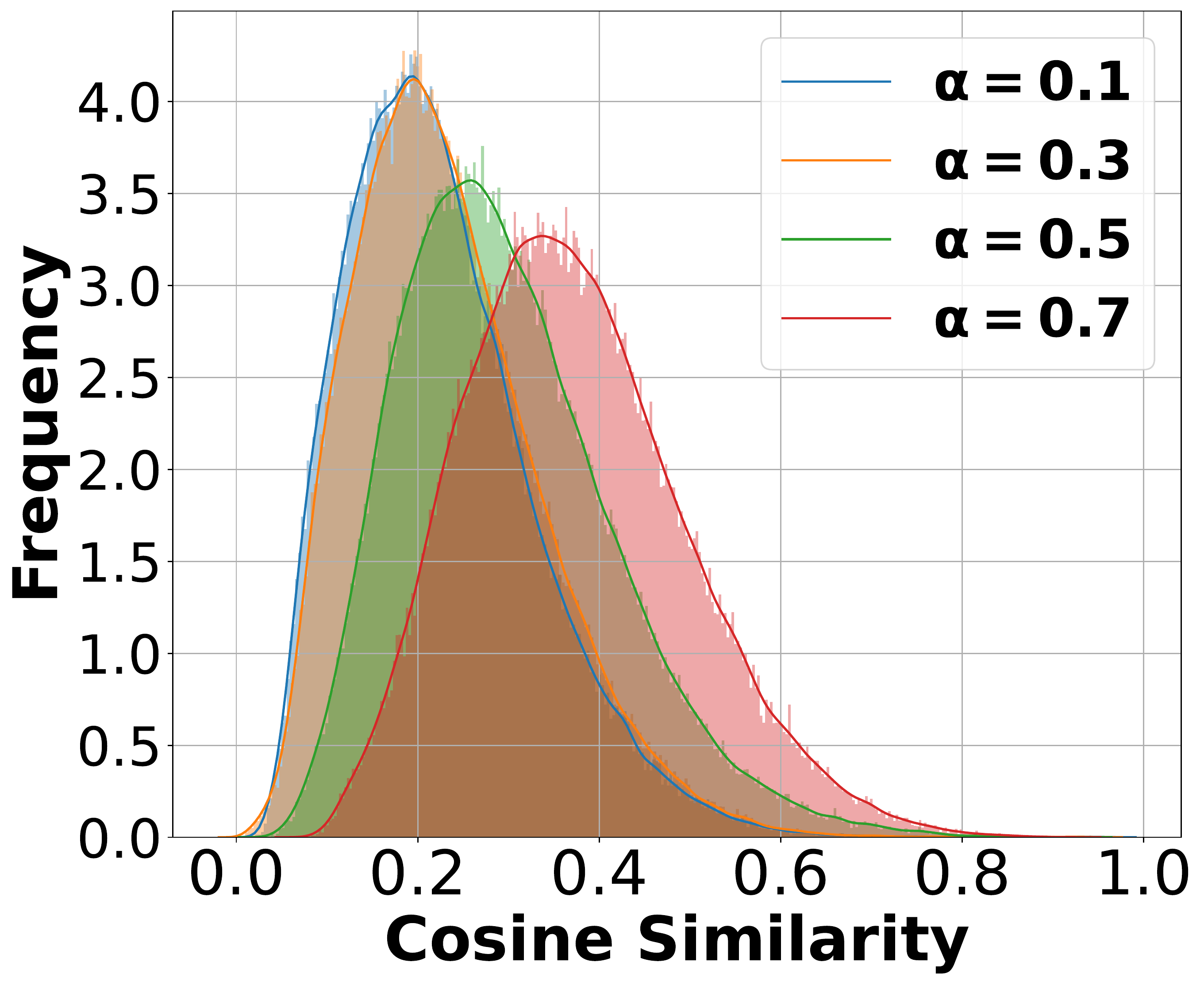}}
    \subfigure{
    \includegraphics[width=0.23\textwidth]{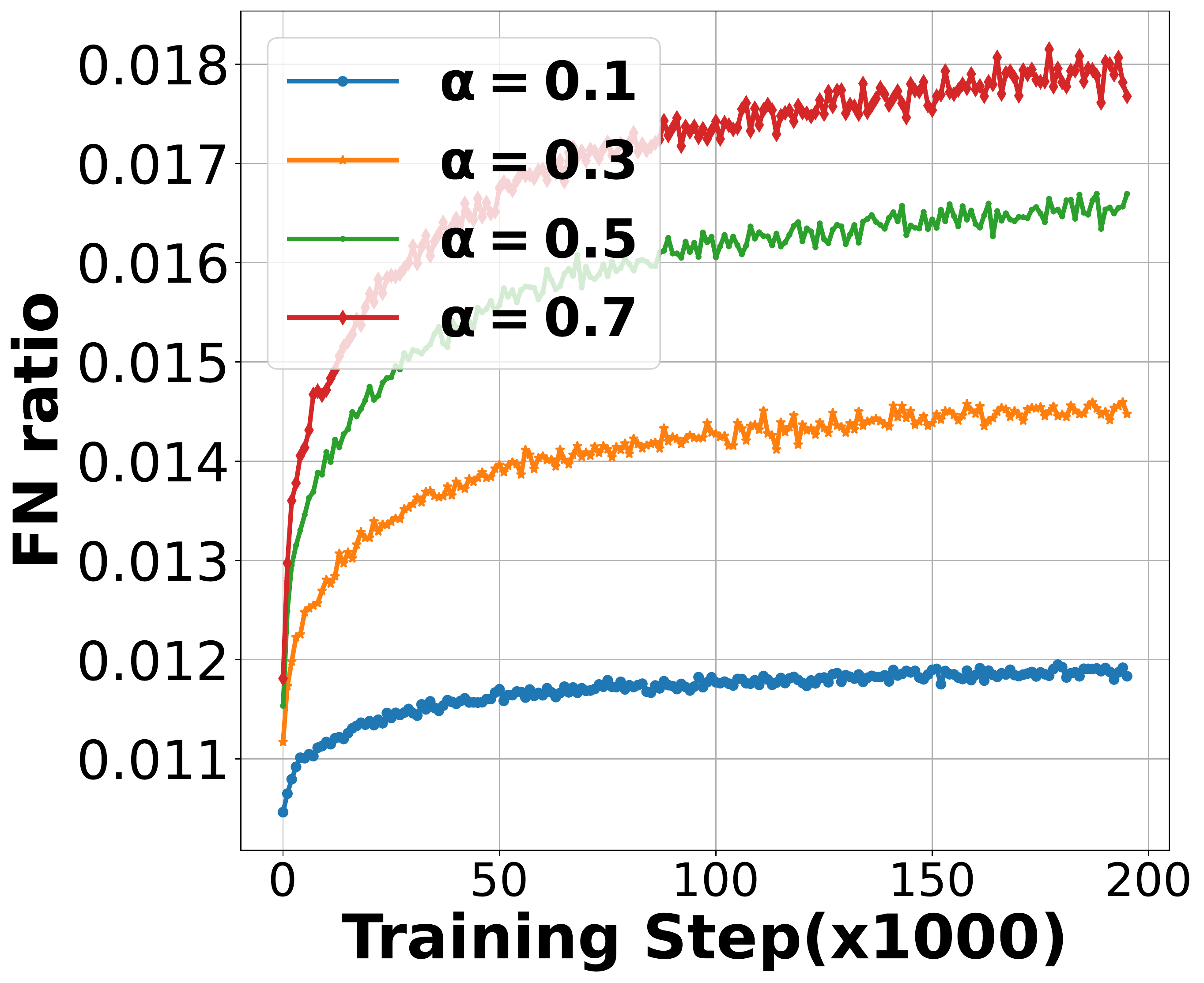}}
    \vspace{-0.3cm}
    \caption{Cosine similarities and percentage of false negatives among various restart probabilities.}
    \label{fig:cos_sim_cifar100}
    \vspace{-0.3cm}
\end{figure}

\subsection{Similarity Comparison Between Positive and Negative Pairs} \label{sec:score_comparison}

To explain the performance degradation of DCL and HCL objectives, we select 12 representative mini-batches and plot the cosine similarity histogram of positive and negative pairs on BERT (top) and RoBERTa (bottom) in Figure~\ref{fig:pos_neg_score}. 
We observe the following: (1) At the start of and throughout the training, the positive pairs are assigned a high cosine similarity~(around 0.9) by the pretrained language model; (2) The negative similarities begin with a relatively high score and gradually skew left because of the self-supervised learning. 
Such phenomenon is consistent to \citet{zhou2022debiased}. 
DCL and HCL which leverage the difference between positive and negative similarity to reweight the negative scores are inapplicable since the low distribution gap between positive and negative similarities will lead to homogeneous weighting in the objective.

\begin{figure*}[t!]
    \centering
    \includegraphics[width=\textwidth]{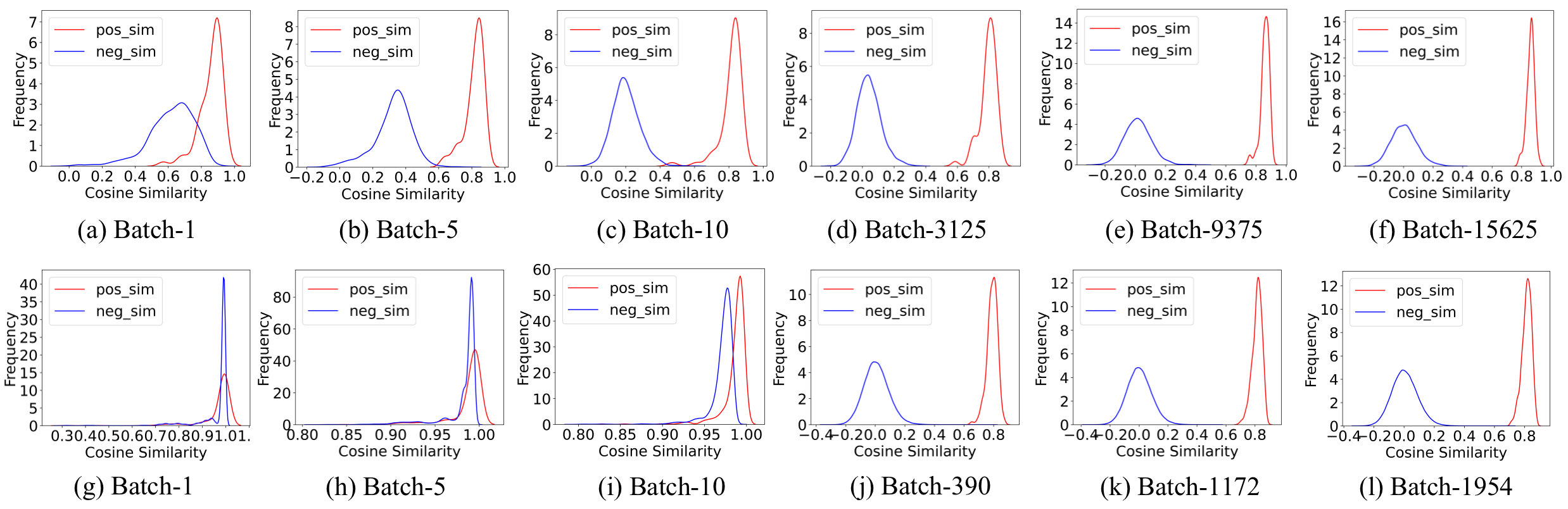}
	\caption{Histograms of cosine similarity on BERT (top) and RoBERTa (bottom).}
	\label{fig:pos_neg_score}
\end{figure*}

\subsection{Impact of Sampling Method on CIFAR100}\label{sec:sampling_cifar100}
We conduct different strategies of proximity graph sampling on CIFAR100 to investigate the impact of sampling methods. As shown in Figure~\ref{fig:graph_sampling_cifar100}, BFS achieves more similar pairs in the sampled batch compared with other proximity graph sampling methods but it introduces a higher percentage of false negatives, which obviously degrades the downstream performance. Conversely, DFS explores paths far away from the selected central node, which can not guarantee that the sampled path (i.e. batch) is within a local cluster. Thus, we theoretically leverage RWR to flexibly modulate the hardness of the sampled batch and achieve a balance between hard negatives and false negatives.

\begin{figure}[t]
    \centering
    \includegraphics[width=0.46\textwidth]{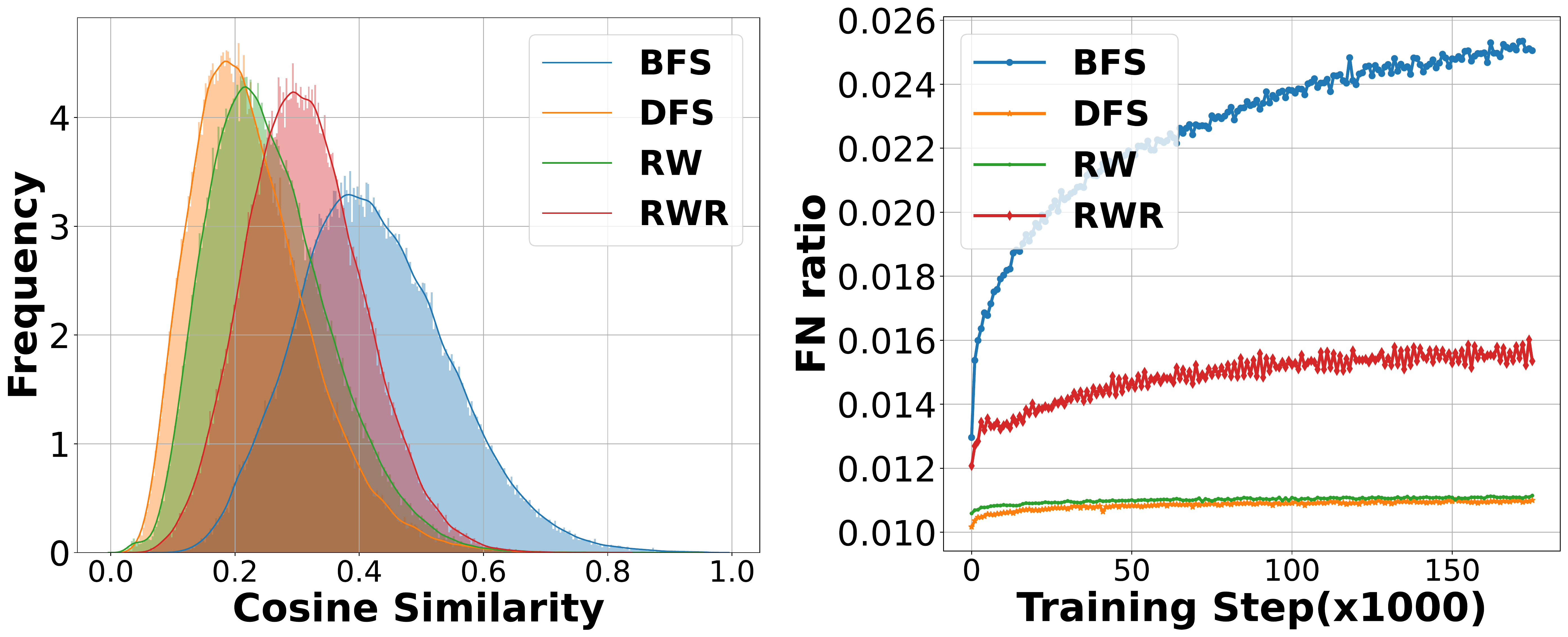}
    \vspace{-3mm}
    \caption{Histograms of cosine similarity and Percentage of false negative of all pairs in a batch for embeddings trained using different sampling methods.}    
    \label{fig:graph_sampling_cifar100}
    \vspace{-3mm}
\end{figure}

\subsection{Parameter Analysis}\label{sec:appendix_parameter}

\subsubsection{Batchsize $B$}\label{sec:appendix_N}

To analyze the impact of the batchsize $B$, we vary $B$ in the range of \{16, 32, 64, 128, 256\} and summarize the results in Table~\ref{tab:batchsize}. In vision modality, it can be found that a larger batchsize leads to better results, which is consistent with the previous studies~\citep{simclr2020,he2019momentum,mochi2020}. In language domain, \model reaches its optimum at $B=64$, which aligns with the results in SimCSE~\cite{simcse2021}. For graphs, we can observe that the performance improves slightly with increasing batch size.

\begin{small}
\begin{table}[hbpt]
    \centering
    \caption{Performance comparison of different batchsize $B$.}    \renewcommand{\arraystretch}{1.15}
    \setlength{\tabcolsep}{1.0mm}{
    \begin{tabular}{c|ccc|c|cc}
    \toprule
    \multirow{2}{*}{$B$} & \multicolumn{3}{c|}{Vision} & Language & \multicolumn{2}{c}{Graphs} \\
     & CIFAR10 & CIFAR100 & STL10 & Wikipedia & IMDB-B  & COLLAB \\
    \midrule
    16 & 79.93 & 46.69 & 56.31 & 76.26 & 71.50 & 71.32  \\
    32 & 84.64 & 56.24 & 68.61 & 74.16 & 71.60 & 71.40 \\
    64  & 89.09 & 61.30 & 74.24 & 76.69 & 71.65 &  71.42 \\
    128 & 91.03  & 65.96 & 82.56 & 76.64 &  71.83 & \textbf{71.48} \\
    256 & \textbf{92.54} & \textbf{68.68} & \textbf{84.38} & 76.11  & \textbf{71.90} & 71.35 \\
    \bottomrule
    \end{tabular}} 
    \label{tab:batchsize}
    \vspace{-8pt}
\end{table}  
\end{small}

\subsubsection{Impact of Neighbor Number $K$}\label{sec:appendix_k}

\begin{figure}[t]
    \centering    
    \includegraphics[width=0.3\textwidth]{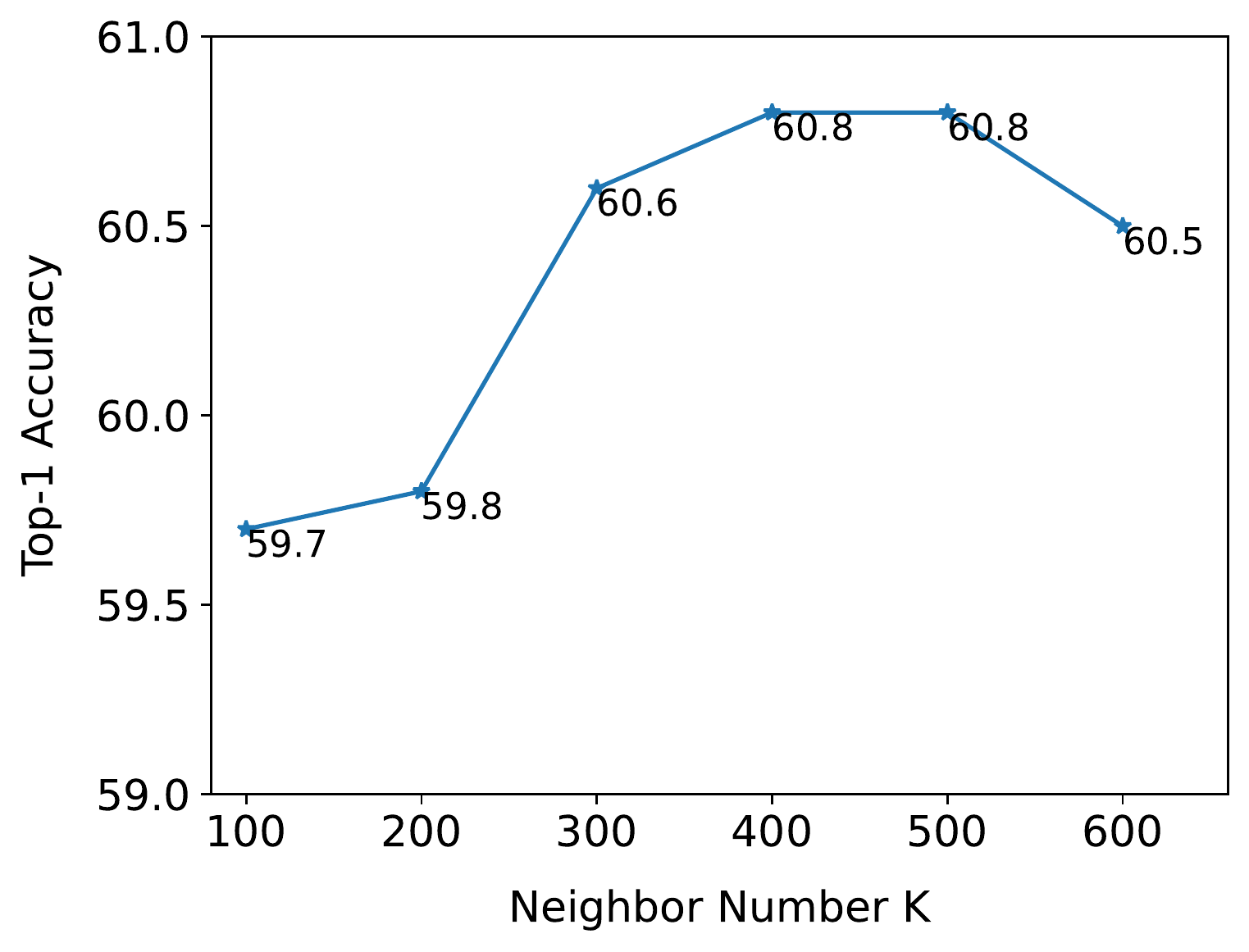}
    \vspace{-4mm}
    \caption{Impact of neighbor number $K$.}
    \label{fig:vartion_k}
  \vspace{-4mm}
\end{figure}

In Figure~\ref{fig:vartion_k}, we investigate the impact of the neighbor number $K$ on ImageNet-100 dataset with the default \model setting. We observe an absolute improvement of 1.1\%  with the increasing size of neighbors. Specifically, model achieves an absolute performance gain of 0.9\% from $K=100$ to $K=300$, while only obtaining 0.2\% from $K=300$ to $K=500$. Such experimental results are consistent with our prior philosophy, in which sampling more neighbors always increases the scale of \graph and urges \model to explore smaller-scope local clusters (i.e. sample harder negatives within a batch), leading to a significant improvement in performance at first. However, performance degrades after reaching the optimum, because larger $K$ introduces more easy negatives.

\subsubsection{Proximity Graph Update Interval $t$}
Proximity graph will be updated per $t$ training iterations, and to analyze the impact of $t$, we vary $t$ in the range of \{50,100,200,400\} for vision and \{10,25,50,100\} for graphs respectively. Table~\ref{tab:update_step} summarizes the experimental results on different update interval $t$. 
It can be observed that update intervals that are too short or too long will degrade the performance. 
The possible reason is that sampling on a \graph that is frequently updated results in unstable learning of the model. 
Besides, the distribution of instances in the embedding space will change during the training process, resulting in a shift in hard negatives. As a result, after a few iterations, the lazy-updated graph cannot adequately capture the similarity relationship.

\begin{small}
    \begin{table}[htb]
    \centering
    \caption{Performance comparison with different update interval $t$ on vision and graph modality.}
    \renewcommand{\arraystretch}{1.15}
    \setlength{\tabcolsep}{2mm}{
    \begin{tabular}{c|c|ccccc}  
    \toprule
    & Update Interval $t$     & 50 & 100 & 200 & 400  \\
    \midrule
    \multirow{2}{*}{Vision} & CIFAR10  & 92.29 & \textbf{92.54} &92.34  & 92.26    \\
     & CIFAR100  & 68.37 & \textbf{68.68} & 67.83   &   68.59  \\ 
     \hline
     \hline
     & Update Interval $t$     & 10 & 25 & 50 & 100  \\
     \hline
     \multirow{2}{*}{Graphs} & IMDB-B & 71.30 & \textbf{71.90} & 71.40 & 71.10   \\
     & COLLAB & 71.06 & 70.36 & \textbf{71.48} & 70.62 \\
    \bottomrule
    \end{tabular}} 
    \label{tab:update_step}
    \end{table}
\end{small}

\subsection{Training Curve}\label{sec:traing_curve}
We plot the training curves on STL10 and ImageNet-100 respectively. As shown in Figure~\ref{fig:converge_image}, on STL10 dataset, \model takes only about 600 epochs to achieve a similar performance as the original SimCLR, which takes 1000 epochs. 
A similar phenomenon can be seen on ImageNet-100.
All these results manifest that \model can bring model better and faster learning.

\begin{figure}[hbpt]
    \centering
    \subfigure[STL10]{
    \includegraphics[width=0.22\textwidth]{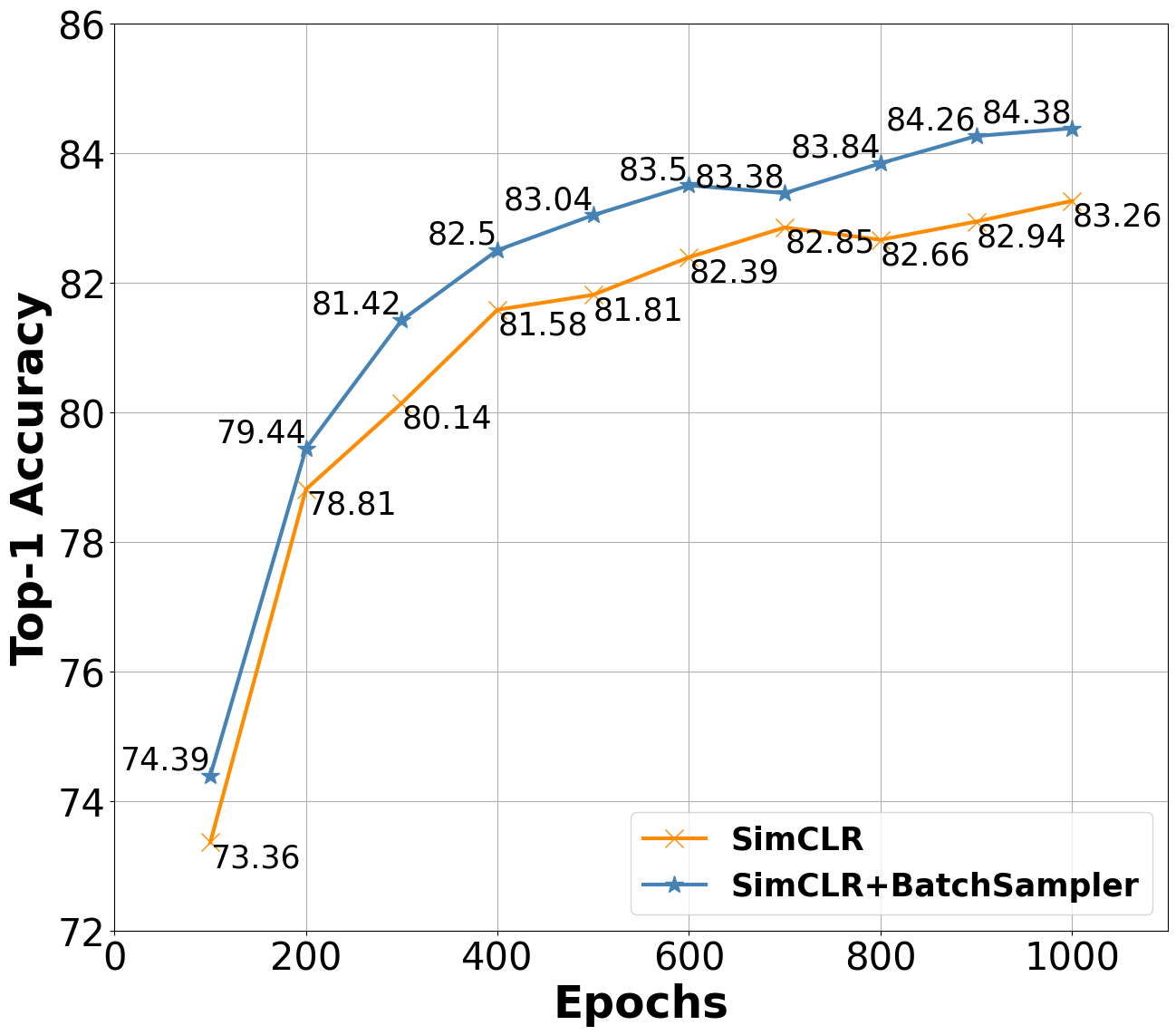}}
    \subfigure[ImageNet-100]{
    \includegraphics[width=0.22\textwidth]{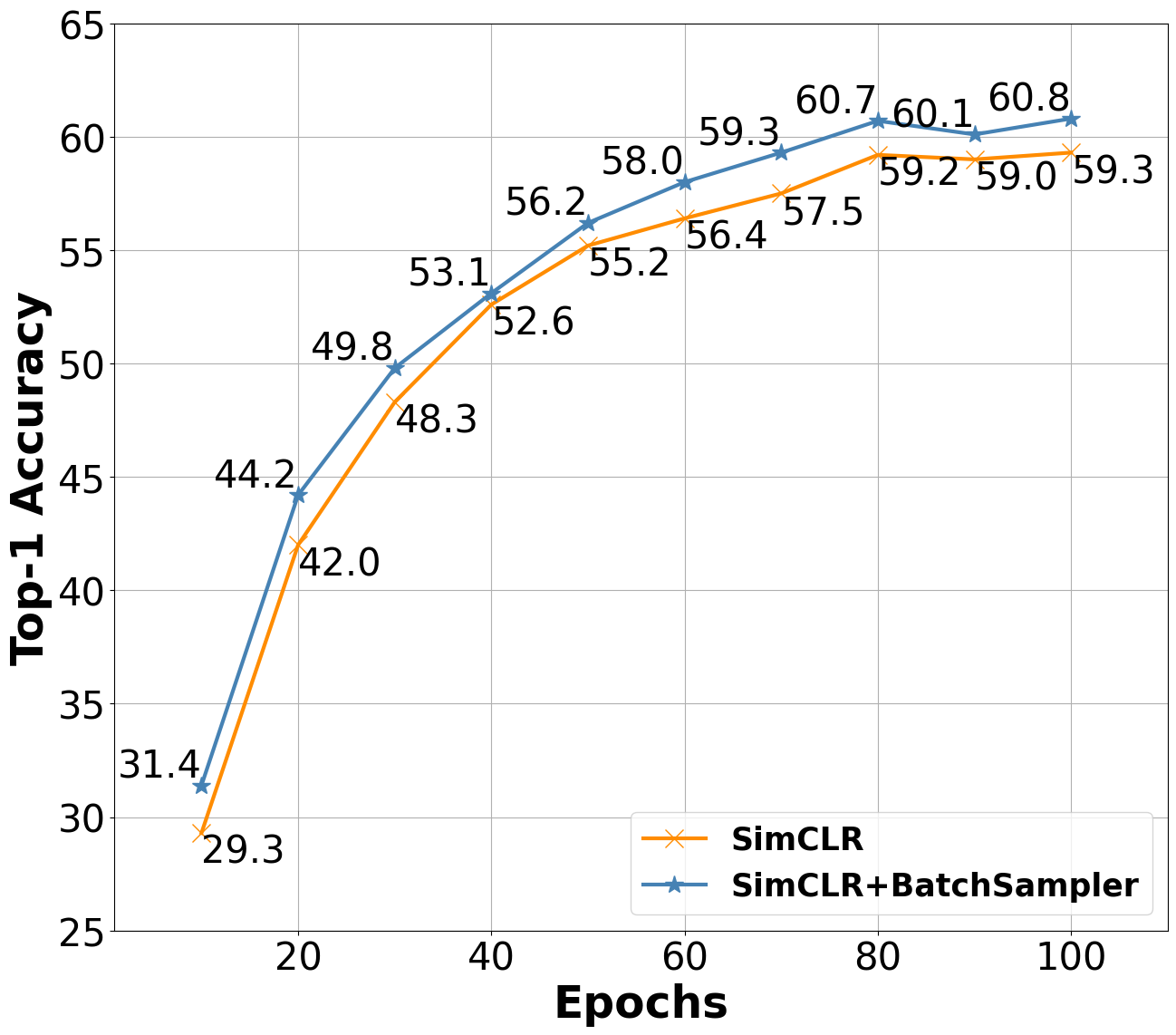}}
    \vspace{-4.mm}
    \caption{Training curves for image classification task on STL10 and ImageNet-100.}
    \label{fig:converge_image}
\end{figure}

\section{InfoNCE Objective and its Variants}\label{sec:infonce_loss}

Here we describe in detail the objective functions of three in-batch contrastive learning methods, including SimCLR~\citep{simclr2020}, GraphCL~\citep{graphcl2020} and SimCSE~\citep{simcse2021}. Besides, we cover two variants, i.e., DCL~\citep{debias2021} and HCL~\citep{hard2021}, which are also applied in the experiments.

\subsection{SimCLR}

SimCLR~\citep{simclr2020} first uniformly draws a batch of instances $\{x_1...\ x_B\}\subset\mathcal{D}$, then augments the instances by two randomly sampled augmentation strategies 
$f_{aug}(\cdot),f'_{aug}(\cdot)\sim\Set{T}$, 
resulting in $2B$ data points. 
Two augmented views $(x_i,x_{i+B})$ of the same image
are treated as a positive pair, while the other $2(B-1)$ examples are negatives. 
The objective function applied in SimCLR for a positive pair~$(x_i,x_{i+B})$ is formulated as:
\begin{equation}
\label{equ:simclr_infonce}
\ell_{i,i+B}=-\log \frac{e^{f(x_i)^T f(x_{i+B})/\tau}}{\sum_{j\neq i}^{2B}e^{f(x_i)^T f(x_j)/\tau}},
\end{equation}

where $\tau$ is the temperature and $f(\cdot)$ is the encoder. 
The loss is calculated for all positive pairs in a mini-batch, including $(x_{i},x_{i+B})$ and $(x_{i+B},x_i)$. It can be found that SimCLR takes all $2(B-1)$ augmented instances within a mini-batch as negatives.

\subsection{GraphCL and SimCSE}

Similar to SimCLR, the objective function of GraphCL~\citep{graphcl2020} and SimCSE~\citep{simcse2021} is defined on the augmented instance pairs within a mini-batch. Given a sampled mini-batch~$\{x_1...\ x_B\}\subset\mathcal{D}$, both GraphCL and SimCSE apply data augmentation to obtain positive pairs, and the loss function for a positive pair~$(x_i,x_{i}^+)$ can be formulated as:
\begin{equation}
\label{equ:simclr_graphcl}
\ell_{i}=-\log \frac{e^{f(x_i)^T f(x_{i}^+)/\tau}}{\sum_{j=1}^{B}e^{f(x_i)^T f(x_j^+)/\tau}}.
\end{equation}

Compared with the SimCLR, GraphCL and SimCSE only take the other $B-1$ augmented instances as negatives.

\subsection{DCL and HCL} 

DCL~\citep{debias2021} and HCL~\citep{hard2021} are two variants of InfoNCE objective function, which aim to alleviate the false negative issue or mine the hard negatives by reweighting the negatives in the objective.
The main idea behind them is using the positive distribution to correct for the negative distribution.

For simplicity, we annotate the positive score $e^{f(x_i)^T f(x_{i}^+)/\tau}$ as $pos$, and negative score $e^{f(x_i)^T f(x_{j}^+)/\tau}$ as $neg_{ij}$.
Given a mini-batch and a positive pair~$(x_i,x_i^+)$, the reweighting negative distribution proposed in DCL and HCL are:
\begin{equation}
\label{equ:simclr_dcl}
\max{\left(
\sum_{j=1}^{B}\frac{-N_{neg}\times\tau^+\times pos+\lambda_{ij}\times neg_{ij}}{1-\tau^+},
e^{-1/\tau}
\right)
},
\end{equation}

where $N_{neg}$ is the number of the negatives in mini-batch, $\tau^+$ is the class probability, $\tau$ is the temperature, and $\lambda_{ij}$ is concentration parameter which is simply set as 1 in DCL or calculated as $\lambda_{ij}=\frac{\beta\times neg_{ij}}{\sum neg_{ij}/N_{neg}}$ in HCL.
All of $\tau^+,\tau,\beta$ are tunable hyperparameters. 
The insight of Eq.\ref{equ:simclr_dcl} is that the negative pair with a score closer to a positive score will be assigned a lower weight in the loss function. 
In other words, the similarity difference between positive and negative pairs dominates the weighting function.

\section{Dataset Details}\label{sec:dataset_detail}

For image representation learning, we adopt five benchmark datasets, comprising CIFAR10, CIFAR100, STL10, ImageNet-100 and ImageNet ILSVRC-2012~\citep{imagent2015}. 
Information on the statistics of these datasets is summarized in
Table~\ref{tab:image_data}. 
For graph-level representation learning, we conduct experiments on 
IMDB-B, IMDB-M, COLLAB, REDDIT-B, PROTEINS, MUTAG, and NCI1~\citep{yanardag2015deep},
the details of which are presented in Table~\ref{tab:graph_data}. 
For text representation learning, we evaluate the method on a one-million English Wikipedia dataset which is used in the SimCSE and  can be downloaded from HuggingFace repository\footnote{\url{https://huggingface.co/datasets/princeton-nlp/datasets-for-simcse/resolve/main/wiki1m\_for\_simcse.txt}}.

\begin{table}[hbpt]
\centering
\caption{Statistics of datasets for image classification task.}
\renewcommand{\arraystretch}{1.15}
\setlength{\tabcolsep}{0.5mm}{
\begin{tabular}{c|ccccc}  
\toprule
 Datasets     & CIFAR10 & CIFAR100 & STL10 &  ImageNet-100 & ImageNet   \\
\midrule
  \#Train         &  50,000     & 50,000  & 105,000    & 130,000  & 1,281,167 \\
  \#Test         &  10,000      & 10,000    & 8,000  & 50,00  & 50,000\\
  \#Classes      &  10   & 100   & 10 & 100 & 1,000 \\     
\bottomrule
\end{tabular}}
\label{tab:image_data}
\vspace{-5pt}
\end{table}

\begin{small}
    \begin{table}[hbpt]
    \caption{Statistics of datasets for graph-level classification task.}
    \centering
    \renewcommand{\arraystretch}{1.15}
    \setlength{\tabcolsep}{0.2mm}{
    \begin{tabular}{c|ccccccc}  
    \toprule
     Datasets     & IMDB-B & IMDB-M & COLLAB  & REDDIT-B  & PROTEINS & MUTAG & NCI1 \\
    \midrule
      \#Graphs   & 1,000  & 1,500    & 5,000 & 2,000 & 1,113 & 188 & 4,110 \\
      \#Classes    & 2  & 3  & 3          & 2  & 2 & 2 & 2 \\
      Avg. \#nodes    & 19.8  & 13.0    & 74.5 & 429.7  & 39.1 & 17.9 & 29.8  \\      
    \bottomrule
    \end{tabular}} 
    \label{tab:graph_data}
    \end{table}
\end{small}

\begin{figure*}[t!]
    \centering
    \includegraphics[width=\textwidth]{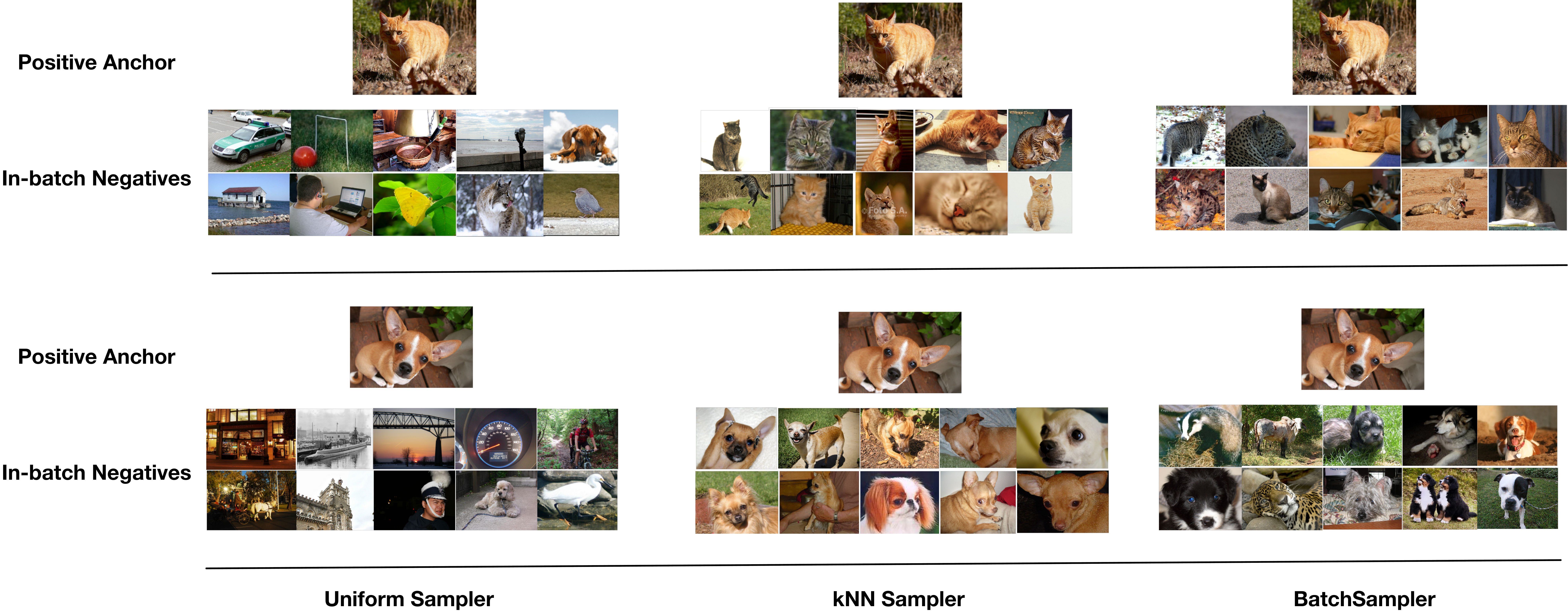}
    \caption{Case study of the negatives sampled by Uniform Sampler, kNN Sampler, and \model based on the encoder trained for 100 epochs on ImageNet. Given an anchor image~(Cat or Dog), \textbf{(Left)} in-batch negatives sampled by Uniform Sampler, \textbf{(Middle)} in-batch negatives sampled by kNN Sampler, and \textbf{(Right)} in-batch negatives sampled by \model based on \graph.}
    \label{fig:case_study}
\end{figure*}

\begin{figure*}[htbp]
    \centering
    \includegraphics[width=\textwidth]{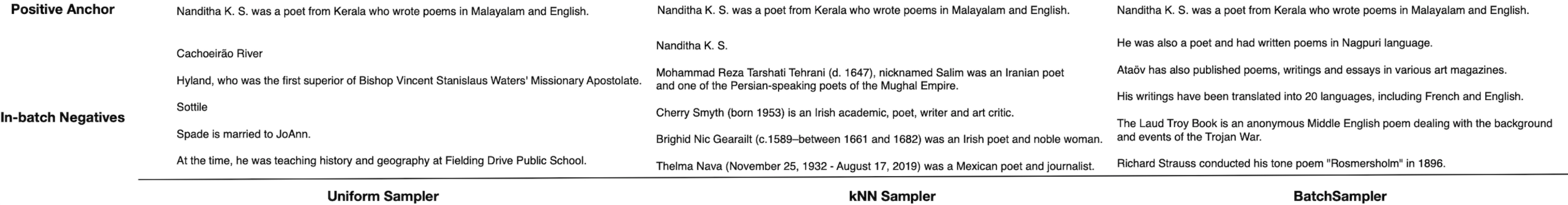}
    \caption{Case study of the negatives sampled by Uniform Sampler, kNN Sampler, and \model on language domain.}
    \label{fig:case_study_text}
\end{figure*}

\begin{figure*}[htbp]
    \centering
    \includegraphics[width=\textwidth]{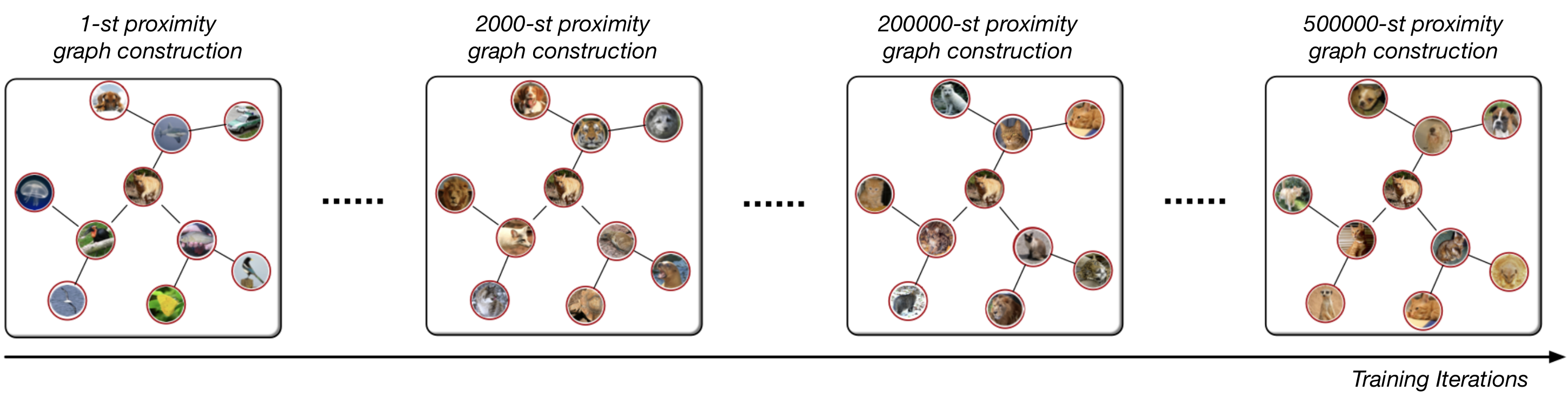}
    \caption{A visualization of the process of updating the proximity graph.}
    \label{fig:case_study_update}
\end{figure*}

\section{Experimental Details} \label{sec:experimental_settings}
\subsection{Image Representations} \label{sec:image_setup}

In image domain, we apply SimCLR~\citep{simclr2020} and MoCo v3~\citep{chen2021empirical} as the baseline method, with ResNet-50~\citep{resnet2016} as an encoder to learn image representations.
The feature map generated by ResNet-50 block is projected to a 128-D image embedding via a two-layer MLP (2048-D hidden layer with ReLU activation function). 
Besides, the output vector is normalized by $l_2$ normalization~\citep{wu2018unsupervised}.
We employ two sampled data augmentation strategies to generate positive pairs and implicitly use other examples in the same mini-batch as negative samples. 

For CIFAR10, CIFAR100 and STL10, all models are trained for 1000 epochs with the default batch size $B$ of 256. We use the Adam optimizer~\citep{kingma2014adam} with a learning rate of 0.001 for optimization. The temperature parameter is set as 0.5 and the dimension of image embedding is set as 128. 
For ImageNet-100 and ImageNet, we train the models with 100 and 400 epochs respectively, and use LARS optimizer~\citep{lars2017} with a learning rate of $0.3\times B/256$ and weight decay of $10^{-6}$. Here, the batch size is set as 2048 for ImageNet and 512 for ImageNet-100, respectively. We fix the temperature parameter as 0.1 and the image embedding dimension as 128. 
After the unsupervised learning, we train a supervised linear classifier for 100 epochs on top of the frozen learned representations.

As for \model, we update the \graph per 100 training iterations. 
We fix the number of neighbors $K$ as 100 for CIFAR10, CIFAR100 and STL10.
The size of the neighbor candidate set $M$ is set as 1000 for CIFAR100 and STL10, and 500 for CIFAR10.
Besides, the initial restart probability $\alpha$ of RWR (Random Walk with Restart) is set to 0.2 and decays linearly to 0.05 with the training process.
For ImageNet-100 and ImageNet, we keep $M$ as 1000 and $K$ as 500. The restart probability $\alpha$ is fixed as 0.1.

\subsection{Graph Representations} \label{sec:graph_setup}

In graph domain, we use the GraphCL~\citep{graphcl2020} framework as the baseline and GIN~\citep{gin2018} as the backbone.
We run \model 5 times with different random seeds and report the mean 10-fold cross-validation accuracy with variance. 
We apply Adam optimizer~\citep{kingma2014adam} with a learning rate of 0.01 and 3-layer GIN with a fixed hidden size of 32.
We set the temperature as 0.2 and gradually decay the restart probability of RWR ($0.2 \sim 0.05$). 
Proximity graph will be updated after $t$ iterations.
The overall hyperparameter settings on different datasets are summarized in Table~\ref{tab:graph_params}.

\begin{small}
    \begin{table}[hbpt]
    \caption{Hyperparameter settings for graph-level representation learning.}
    \centering
    \renewcommand{\arraystretch}{1.15}
    \setlength{\tabcolsep}{0.2mm}{
    \begin{tabular}{c|ccccccc}  
    \toprule
     Datasets     & IMDB-B & IMDB-M & COLLAB & REDDIT-B & PROTEINS & MUTAG & NCI1 \\
    \midrule
      $B$       & 256   & 128    & 128  & 128 & 128 & 128 & 128 \\
      Epoch           &  100    & 50    & 20  & 50 & 20 & 20 & 20 \\ 
      \hline
      $t$    &  25    & 25    & 50 & 50 & 50 & 50 & 50  \\
      $M$        &  500  &  500   & 1,000  & 500 & 500 & 100 & 1000 \\
      $K$       & 100  & 100 & 100 & 100 & 100 & 50 & 100 \\ 
    \bottomrule
    \end{tabular}} 
    \label{tab:graph_params}
    \end{table}
    \vspace{-5pt}
\end{small}

\subsection{Text Representations} \label{sec:text_setup}

In text domain, we use SimCSE~\citep{simcse2021} as the baseline method and adopt the pretrained BERT and RoBERTa provided by HuggingFace\footnote{\url{https://huggingface.co/models}} for sentence embedding learning. 
Following the training setting of SimCSE, we train the model for one epoch in an unsupervised manner and evaluate it on 7 STS tasks.
Proximity graph will be only built once based on the pretrained language models before training.
For BERT, we set the batch size to 64 and the learning rate to $3 \times 10^{-5}$. For RoBERTa, the batch size is set as 512 and the learning rate is fixed as $10^{-5}$.  
We keep the temperature as 0.05, the number of neighbor candidates $M$ as 1000, the number of neighbors $K$ as 500, and the restart probability $\alpha$ as 0.7 for both BERT and RoBERTa.

\subsection{Case Study}\label{sec:case_study}
To give an intuitive impression of the mini-batch sampled by \model, 
we show some real cases of the negatives sampled by Uniform Sampler, kNN Sampler, and \model in Figure~\ref{fig:case_study}. 
For a given anchor~(a cat or a dog), we apply Uniform Sampler, kNN Sampler, and \model to draw a mini-batch of images, and randomly pick 10 images from the sampled batch to show in-batch negatives for the positive anchor.    
Obviously, compared with Uniform Sampler, the images sampled by \model are more semantically relevant to the anchor in terms of texture, background, or appearance. Furthermore, kNN Sampler introduces so many false negatives that belong to the same class with positive anchor, degrading downstream performance. Thus, the proposed \model can effectively balance the exploitation of hard negatives and the FN issue. 

Furthermore, we provide a visualization of language to demonstrate the effectiveness and generalizability of \model in Figure~\ref{fig:case_study_text}. Here, we also take a vision as an example and visualize the process of updating the proximity graph in Figure~\ref{fig:case_study_update}. We maintain the default starting image and randomly sample a predetermined number of neighboring images at a specific iteration in each proximity graph.

\end{document}